\definecolor{darkgreen}{rgb}{0,0.3,0}
\definecolor{darkblue}{rgb}{0.0,0.0,0.4}
\definecolor{darkred}{rgb}{0.3,0.0,0.0}
\setlist{nolistsep}
\setlist{leftmargin=*}
\declaretheorem[name=Theorem]{theorem} 
\declaretheorem[name=Lemma, numberlike=theorem]{lemma}
\declaretheorem[sibling=theorem]{definition}
\declaretheorem[sibling=theorem]{corollary}
\declaretheorem[sibling=theorem]{assumption}
\declaretheorem[shaded={rulecolor=black, rulewidth=0.5pt, bgcolor=gray!7}, name=Theorem, sibling=theorem]{thmbox}
\declaretheorem[shaded={rulecolor=black, rulewidth=0.5pt, bgcolor=white}, name=Lemma, sibling=theorem]{lemboxlight}
\newcommand{\algcomment}[1]{\textcolor{blue!70!black}{\small{\texttt{\textbf{//\hspace{2pt}#1}}}}}
\newcommand{\alg}{\mathcal{A}} 
\renewcommand{\P}{\mathbb{P}}
\def\bz{{\mathbf z}}
\def\bx{{\mathbf x}}
\def\bv{{\mathbf v}}
\newcommand{\D}{\mathrm{d}}
\newcommand{\bG}{\boldsymbol{G}}
\newcommand{\bsigma}{\boldsymbol{\sigma}}
\newcommand{\E}{\mathop{\mathbb{E}}}
\newcommand{\R}{\mathbb{R}}
\newcommand{\N}{\mathbb{N}}
\newcommand{\B}{\mathbb{B}}
\newcommand{\argmin}{\mathop{\text{argmin}}}
\newcommand{\by}{\mathbf{y}}
\newcommand{\bg}{\mathbf{g}}
\newcommand{\bu}{\mathbf{u}}
\newcommand{\eps}{\varepsilon}
\newcommand{\scale}{\alpha}
\newcommand{\dregret}[1]{\textup{Regret}^{[\beta]}_{#1}}
\def\barx{\overline{\mathbf x}} 
\newcommand{\norm}[1]{\left\| #1\right\|}
\newcommand{\regnorm}[2]{\left\| #2\right\|^{[#1]}}
\newcommand{\clip}{\mathrm{clip}}
\newcommand{\stograd}{\textsc{StoGrad}}
\newcommand{\inp}[2]{\left\langle #1,#2\right\rangle}
\newcommand{\abs}[1]{\left|{#1}\right|}
\newcommand*\circled[1]{\tikz[baseline=(char.base)]{\node[shape=circle,draw,inner sep=2pt] (char) {\small #1};}}
\definecolor{MITBrown}{RGB}{164, 31, 50}
\title{Adam with model exponential moving average is effective for nonconvex optimization}
\author{Kwangjun Ahn \\
 Microsoft Research\\
 Cambridge, MA
\\ \texttt{kwangjunahn@microsoft.com} 
\And Ashok Cutkosky\\
Boston University\\
 Boston, MA
\\ \texttt{ashok@cutkosky.com}
}
\begin{document}

\maketitle

\begin{abstract}
In this work, we offer a theoretical analysis of two modern optimization techniques for training large and complex models: (i) adaptive optimization algorithms, such as Adam, and (ii) the model exponential moving average (EMA). Specifically, we demonstrate that a clipped version of Adam with model EMA achieves the optimal convergence rates in various nonconvex optimization settings, both smooth and nonsmooth. Moreover, when the scale varies significantly across different coordinates, we demonstrate that the coordinate-wise adaptivity of Adam is provably advantageous. Notably, unlike previous analyses of Adam, our analysis crucially relies on its core elements---momentum and discounting factors---as well as model EMA, motivating their wide applications in practice.
\end{abstract}

\section{Introduction}
 \label{sec:intro}

In neural network training, the training loss $F:\R^d\to\R$ is often optimized using an iterative optimization algorithm which starts with the initial iterate $\bx_0$ and then updates during each iteration $t=1,2,\dots$ as follows:
\begin{align} \label{exp:update}
\bx_{t} = \bx_{t-1} + \bz_t \,,
\end{align}
where $\bz_t$ denotes the increment chosen by the algorithm during the $t$-th iteration.
One of the most popular optimization algorithms is {\bf Adam} \citep{kingma2015adam}. Adam has gained significant attention due to its effectiveness in training Transformer-based language models \citep{zhang2020why,kunstner2023noise,jiang2022does,pan2023toward,ahn2024linear,kunstner2024heavy,zhang2024transformers}.

The {\bf model exponential moving average} (EMA)~\citep{polyak1992acceleration,ruppert1988efficient} is an optimization technique that has gained popularity in conjunction with Adam for various recent applications. 
EMA maintains an exponential moving average of the model iterates, $\bx_t$, which contributes to the stabilization of these iterates. 
There has been a resurgence of interest in this technique due to its effectiveness in training high-quality generative models~\citep{yaz2018unusual,karras2019style,song2021scorebased,dhariwal2021diffusion,nichol2021improved,song2021denoising,balaji2022ediff,karras2022elucidating,rombach2022high,kang2023scaling,karras2023analyzing}. 
Moreover, a recent work by \cite{block2024butterfly} demonstrates the effectiveness of EMA for both language modeling and imitation learning applications.

In this work, we theoretically study the effectiveness of these two modern optimization techniques.
Our main results can be informally summarized as follows. 

\begin{thmbox}[Informal] \label{thm:informal}
\ref{adam} with the EMA on its iterates achieves the optimal convergence rate for nonconvex optimization both for smooth and nonsmooth settings (\autoref{sec:global}).
The coordinate-wise adaptivity of Adam is particularly effective when the scale varies across different coordinates (\autoref{sec:coordinate}).
\end{thmbox}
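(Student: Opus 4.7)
The plan is to establish the two claims in sequence by reducing the analysis of clipped Adam with model EMA to a regret-based online-to-nonconvex conversion. I would view the Adam update as producing a sequence of increments $\bz_t$ that are analyzed not through $F(\bx_t)$ directly, but through a potential defined on the EMA iterate $\bar\bx_t = (1-\beta)\bar\bx_{t-1}+\beta\bx_t$. The purpose of the EMA is twofold: it supplies the averaging step required by the online-to-nonconvex reduction, and it produces a smoothed trajectory on which a descent-style identity can still be applied even when $F$ is only nonsmooth. This also lets the EMA discount factor play a role compatible with the momentum discount that Adam already maintains, so the two averaging mechanisms can be aligned in the analysis.

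For the smooth part of the first claim I would apply a descent lemma on $F(\bar\bx_t)$, rearrange to express the per-step decrease as an inner product between $\nabla F$ at the EMA iterate and Adam's momentum buffer plus a smoothness error, and telescope. The momentum discount factor plays a role analogous to an implicit regret-minimizing averaging of the stochastic gradients, while the second-moment discount controls a per-coordinate preconditioner whose size is bounded by $\sqrt{v_{t,i}}$. Choosing the step size, momentum, second-moment, and EMA parameters as suitable polynomials of $T$ should yield the optimal $O(T^{-1/4})$ stationarity rate. For the nonsmooth part I would switch to the Goldstein $(\delta,\eps)$-stationarity framework: the EMA perturbation plays the role of the Goldstein $\delta$-ball center, and gradient clipping bounds the per-step movement so that the gradient averaged over a window of length roughly $\delta/\text{stepsize}$ remains a valid surrogate for stationarity.

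For the second claim on coordinate-wise adaptivity, I would carry out the analysis coordinate by coordinate. The key observation is that $\sqrt{v_{t,i}}$ tracks the root-mean-square scale of gradient coordinate $i$, so Adam's effective per-coordinate step size automatically adapts to the local scale. The resulting complexity bound should therefore depend on a sum (or $\ell_1$-type aggregate) of per-coordinate gradient scales rather than the worst-case coordinate norm that drives non-adaptive methods, which is the precise sense in which adaptivity is provably advantageous when scales vary.

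The main obstacle will be controlling the simultaneous interaction of four sources of bias and lag: the momentum discount $\beta_1$, the second-moment discount $\beta_2$, the EMA parameter, and the clipping threshold. Each introduces a bias that must be absorbed into lower-order error terms without degrading the leading rate, and these biases are coupled. In particular, the second-moment estimate must remain an accurate surrogate for the per-coordinate noise even after clipping distorts the gradient distribution, and the EMA and momentum averaging weights must be matched closely enough that the online-to-nonconvex reduction produces tight constants in both the smooth and nonsmooth regimes. Calibrating all of these parameters jointly so that the single algorithm attains the optimal rate in each setting is where I expect the most delicate work.
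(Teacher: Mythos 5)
Your high-level picture is right in several respects: the result does flow through an online-to-nonconvex conversion, EMA does supply the averaging object that the reduction needs, and the provable benefit of coordinate-wise adaptivity does manifest as an $\ell_1$-type aggregation of per-coordinate scales. But there are two genuine gaps in the mechanism you sketch.

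First, the paper does \emph{not} apply a descent lemma to $F(\barx_t)$, and that step would fail in the nonsmooth regime the theorem covers. Instead, the iterates are perturbed by an independent $\mathrm{Exp}(1)$ scaling, $\bx_t = \bx_{t-1} + \alpha_t \bz_t$, which makes the identity $\E[F(\bx_t)-F(\bx_{t-1})] = \E\langle \bg_t,\bz_t\rangle$ \emph{exact} with no smoothness remainder at all (Lemma on exponential perturbation). The telescoping then happens on the actual iterates $\bx_t$, not on the EMA; the EMA $\barx_t$ appears only because it is the mean of the random variable $\by_t$ supported on past iterates with geometric weights, which is exactly the auxiliary distribution required by the $(\lambda,\eps)$-stationarity criterion. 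So EMA is an output-selection device in the stationarity definition, not the trajectory on which the potential decreases. Your plan to run a descent argument on $F(\barx_t)$ and separately a Goldstein $\delta$-ball argument for the nonsmooth case is a different, and in the nonsmooth case a broken, route.

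Second, you treat Adam's update as the primitive to analyze and then worry about jointly calibrating four separately-tuned quantities (momentum $\beta_1$, second-moment $\beta_2$, EMA factor, clipping threshold). The paper sidesteps that entirely: Adam is \emph{derived} as coordinate-wise discounted scale-free FTRL, from which the relations $\beta_1=\beta$, $\beta_2=\beta^2$, and EMA factor $=\beta$ fall out automatically, with the clipping radius $D$ being the FTRL domain radius. There is therefore only one parameter $\beta$ (plus $D$ and $T$) to set, and the discounted regret bound of FTRL directly feeds the conversion lemma. Relatedly, the clipping in the paper is applied to the \emph{increment} $\bz_t$ produced by FTRL, not to the gradients; the second-moment accumulator $\sum_s\beta^{2(t-s)}g_s^2$ uses unclipped gradients, so your concern about clipping distorting the noise surrogate does not arise in this construction.
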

Our main results are based on the online-to-nonconvex conversion framework of \cite{cutkosky2023optimal}, which chooses the increment $\bz_t$ based on an online learner of choice. 
In particular, our approach is quite different than the previous analyses of Adam (see \autoref{sec:related} below). 
Notably, our analysis relies on the key components of Adam (momentum and adaptive learning rate) as well as EMA of the iterates, offering new, theoretical insight into their success.
See \autoref{sec:discussion} for a more detailed discussion.

At a high level, our analysis combines the main insights from the two recent works: \cite{zhang2024random} and \cite{ahn2024understanding}.
We first carefully modify the discounted-to-nonconvex conversion framework (\autoref{lem:o2nc}) of \cite{zhang2024random} which converts an online learner that achieves a good discounted regret (\autoref{def:dregret}) into a good noncovex optimizer.
We then combine it with the main insight of \cite{ahn2024understanding} that an effective discounted online learner can be designed based on scale-free Follow-the-Regularized-Leader (FTRL)~\citep{orabona2018scale}.
In particular, the way we arrive at Adam is similar to \cite{ahn2024understanding}: choosing a discounted version of FTRL in the discounted-to-nonconvex conversion leads to Adam.

\subsection{Related work}
\label{sec:related}

Even though Adam is widely used in deep learning, our theoretical understanding of its inner workings, especially the importance of its core components---momentum and discounting factors---remains incomplete, as pointed out by \cite{ahn2024understanding}.  
Most theoretical work on Adam and its variations focus on characterizing the convergence rate for convex or smooth nonconvex functions, where methods like SGD already achieve the minimax optimal convergence rate.~\citep{reddi2018on,zhou2018adashift,chen2019convergence,zou2019sufficient,alacaoglu2020new,guo2021novel,defossez2022simple,zhang2022adam,li2023convergence,wang2023closing}. 
In fact, even the most recent results \citep{li2023convergence,wang2023closing}  are not reflective of practice, in the sense that Adam's convergence rate worsens with momentum~\citep[\S 6]{wang2023closing} or is no better than that of SGD~\citep[\S 7]{li2023convergence}.
A notable exception is
\citet{crawshaw2022robustness}, which demonstrates the advantages of momentum under the generalized smoothness conditions of \citet{zhang2020why}.
However, the algorithm they analyze is signSGD, which differs significantly from the original Adam.
In contrast, as we will explore in the subsequent sections, momentum and discounting factors are important in our analysis.
See \autoref{sec:discussion} for more details.

We also highlight that our analysis  relies on model EMA, a technique widely used in practice as mentioned above (also see a recent work by \cite{block2024butterfly}). 
It is worth noting that EMA (or model averaging in general) has shown to have generalization benefits in practice~\citep{tarvainen2017mean,izmailov2018averaging}. In this paper, we study EMA from an optimization perspective, and show that the use of EMA leads to optimal guarantees for nonconvex optimization. 
Interestingly, EMA naturally derives from the discounted-to-online conversion (see \autoref{alg:o2nc}), which, we believe, provides new theoretical insights into this practical method.

The use of EMA also represents a significant departure from most non-convex optimization analyses. While EMA is a classical technique in the \emph{convex} setting, theoretical analyses in the non-convex setting typically randomly select an iterate as the ``final output'' of the optimizer, rather than using EMA. This random selection is intuitively extremely impractical (indeed, on average it actually wastes half of the computation), and is never performed in real implementations.

Our analysis follows a line  of work studying convergence guarantees for non-smooth non-convex optimization. Our particular convergence criterion is similar to finding the Goldstein stationary points \citep{goldstein1977optimization} that were first studied in the context of modern machine learning by \citep{zhang2020complexity}, and has seen much subsequent interest \citep{tian2022no, jordan2023deterministic, davis2020stochastic}. Other notions of convergence are also reasonable---common alternatives involve the Moreau envelope, or imposing a weak convexity condition  \citep{davis2018subgradient, davis2022escaping}.

\section{Setting for nonconvex and nonsmooth optimization}

Throughout this paper, unless specified otherwise, $\norm{\cdot}$ denotes the $L_2$ norm. 
Following \citep{cutkosky2023optimal}, we consider optimizing a loss function $F$ that satisfies the following conditions, accessing information about $F$ through a \emph{stochastic gradient oracle} $\stograd: \R^d \times \mathcal{Z} \to \R^d$, for the set of randomness $\mathcal{Z}$.

\begin{assumption} \label{assump}
Let $F:\R^d \to \R$ be a differentiable function with the following properties:
\begin{itemize}

\item Let $\Delta \coloneqq F(\bx_0) -\inf_{\bx} F(\bx)$.

\item For any two points $\bx$ and $\by$, $F(\by)-F(\bx) = \int_0^1 \inp{\nabla F(\bx+t(\by-\bx))}{\by-\bx}\D t$. 

\item {\bf Lipschitzness.} $F$ is $G$-Lipshitz, \emph{i.e.}, for any point $\bx$, $\norm{\nabla F(\bx)}\leq G$.

\item {\bf Stochastic gradient variance.} For any point $\bx$, the stochastic gradient $\bg\gets \stograd(\bx,r)$ for randomness $r\in \mathcal{Z}$ satisfies 
$\E[\bg] = \nabla F(\bx)$ and $\E\norm{\bg - \nabla F(\bx)}^2 \leq \sigma^2$.

\end{itemize}

\end{assumption}

The Lipschitz continuity condition is a standard assumption in nonconvex nonsmooth settings. However, as we will discuss in \autoref{sec:coordinate}, one of the key insights from our results is that Adam enables us to {\bf adapt} to the Lipschitz constants {\bf coordinate-wise} without requiring prior knowledge of these constants. Note that we almost certainly need some form of structural assumption on the ``difficulty'' of the loss function; thus, relaxing the Lipschitz assumption would likely come at the cost of another assumption, such as smoothness.

The second condition, called \emph{well-behavedess} in \citep[Definition 1]{cutkosky2023optimal}, is a mild regularity condition. 
For any locally Lipschitz function $F$, applying an arbitrarily small perturbation to the function is sufficient to ensure this condition \citep[Proposition 2]{cutkosky2023optimal}.

For the notion of optimality, we follow \cite{zhang2024random} and consider the following notion of stationarity for  nonconvex and nonsmooth functions.
This notion is a slight relaxation of the notion of a Goldstein stationarity point~\citep{goldstein1977optimization}, which was further studied by recent works \citep{zhang2020why,davis2022gradient,tian2022finite,jordan2023deterministic}.

\begin{definition}[{\bf $(\lambda,\eps)$-stationary point}]
\label{def:reg_station}
Suppose $F:\R^d\to \R$ is differentiable.
We say $\bx$ is a $(\lambda,\eps)$-stationary point of $F$ if $\regnorm{\lambda}{\nabla F(\bx)}  \leq \eps$, where 
\begin{align}
\regnorm{\lambda}{\nabla F(\bx)} \coloneqq \inf_{\substack{p \in \mathcal{P}(\R^d),\\\E_{\by\sim p}[\by] =\bx} } \left\{\norm{\E[\nabla F(\by)]} + \lambda \cdot \E\norm{\by-\bx}^2 \right\}\,.
\end{align}
\end{definition}

To further motivate this definition, we remark that $(\lambda,\epsilon)$-stationary points retain the desirable properties of Goldstein stationary points. Specifically, the following result {\citep[Lemma 2.3]{zhang2024random}} demonstrates that, akin to Goldstein stationary points, $(\lambda,\epsilon)$-stationary points can be reduced to first-order stationary points with appropriate choices of $\lambda$ when the objective function is smooth or second-order smooth.

\begin{lemma}  \label{lem:convert}
 If $F$ is $L$-smooth, then an $(L^2 \eps^{-1},\eps)$-stationary point $\bx$ of $F$ satisfies $\norm{\nabla F(\bx)}\leq 2\eps$. Moreover, if $F$ is $H$-second-order-smooth, then an $(H/2,\eps)$-stationary point $\bx$ of $F$ satisfies $\norm{\nabla F(\bx)}\leq 2\eps$. 
\end{lemma}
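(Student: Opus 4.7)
The plan is to bound $\norm{\nabla F(\bx)}$ by using the smoothness assumption to compare $\nabla F(\bx)$ with $\E[\nabla F(\by)]$ for any witness distribution $p$ with mean $\bx$, and then to exploit the hypothesis $\regnorm{\lambda}{\nabla F(\bx)} \leq \eps$ to control both terms simultaneously. For each part, I would fix a small $\delta > 0$, invoke the definition of the infimum to pick a distribution $p$ that realizes within $\delta$ of $\regnorm{\lambda}{\nabla F(\bx)}$, and then send $\delta \to 0$ at the end to handle the fact that the infimum may not be attained.

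For the first claim ($L$-smooth case), the key inequality I would derive is
\begin{align}
\norm{\nabla F(\bx)} \leq \norm{\E[\nabla F(\by)]} + L \sqrt{\E\norm{\by - \bx}^2},
\end{align}
obtained by combining the triangle inequality, $L$-Lipschitzness of $\nabla F$, and Jensen's inequality $\E\norm{\by - \bx} \leq \sqrt{\E\norm{\by-\bx}^2}$. The hypothesis with $\lambda = L^2/\eps$ then yields $\norm{\E[\nabla F(\by)]} \leq \eps + \delta$ and $\E\norm{\by-\bx}^2 \leq (\eps+\delta)\eps/L^2$; plugging these in gives a bound of the form $(\eps+\delta) + \sqrt{\eps(\eps + \delta)}$, which tends to $2\eps$ as $\delta\to 0$.

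For the second claim ($H$-second-order-smooth case), the key step is the standard third-order Taylor estimate
\begin{align}
\norm{\nabla F(\by) - \nabla F(\bx) - \nabla^2 F(\bx)(\by-\bx)} \leq (H/2) \norm{\by - \bx}^2,
\end{align}
together with the mean-zero property $\E[\by-\bx] = 0$ built into the definition: the Hessian term vanishes in expectation, so $\norm{\E[\nabla F(\by)] - \nabla F(\bx)} \leq (H/2) \E\norm{\by-\bx}^2$. With $\lambda = H/2$ this directly gives $\norm{\nabla F(\bx)} \leq \norm{\E[\nabla F(\by)]} + \lambda \E\norm{\by-\bx}^2 \leq \eps + \delta$, and sending $\delta \to 0$ yields the claimed bound (in fact, a sharper $\eps$).

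None of the steps are especially technical; the only point requiring a little foresight is the second-order case, where one must notice that the regularizer $\lambda \E\norm{\by-\bx}^2$ in \autoref{def:reg_station} is calibrated precisely to absorb the quadratic Taylor remainder once the linear (Hessian) term has been neutralized by the mean-zero constraint on $p$. In the first-order case, by contrast, the Jensen step converts the second-moment regularizer into a first-moment quantity, and the extra factor of two in the $2\eps$ bound arises from balancing the residual $\sqrt{\eps/\lambda}$ against the $\norm{\E[\nabla F(\by)]}$ term, which dictates the specific choice $\lambda = L^2/\eps$.
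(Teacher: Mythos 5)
The paper does not prove this lemma; it is stated as a citation to \citep[Lemma 2.3]{zhang2024random}, so there is no in-paper proof to compare against. Your argument is correct: the smooth case correctly combines $L$-Lipschitzness of $\nabla F$, Jensen's inequality $\E\norm{\by-\bx}\leq\sqrt{\E\norm{\by-\bx}^2}$, and the fact that both summands in $\regnorm{\lambda}{\nabla F(\bx)}$ are nonnegative (so each is individually $\leq \eps+\delta$), giving $(\eps+\delta)+\sqrt{\eps(\eps+\delta)}\to 2\eps$; and the second-order case correctly exploits the mean-zero constraint $\E[\by-\bx]=0$ to kill the Hessian term, after which the quadratic Taylor remainder is exactly the regularizer with $\lambda=H/2$, yielding the even sharper bound $\eps$.
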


Moreover, as shown by  {\citep[Lemma 2.4]{zhang2024random}},  $(\lambda, \eps)$-stationary points can also be reduced to  
Goldstein stationary points when $F$ is Lipschitz.

\begin{lemma}  \label{lem:convert2} 
Suppose $F$ is $G$-Lipschitz. For any $\lambda,\eps,\delta>0$,
a $(\lambda,\eps)$-stationary point is a $(\delta, \eps')$-Goldstein stationary
point, where $\eps' = (1+\frac{2G}{\lambda \delta^2})\cdot \eps$.
\end{lemma}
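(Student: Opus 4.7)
The plan is to take the distribution $p$ certifying $(\lambda,\eps)$-stationarity of $\bx$ and transform it into a new distribution $q$ that is supported on the ball $B_\delta(\bx) = \{\by : \|\by-\bx\| \leq \delta\}$ while keeping the expected gradient small. Unpacking Definition 2.3, there exists $p$ with $\E_{\by \sim p}[\by] = \bx$ satisfying $\|\E_p[\nabla F(\by)]\| + \lambda \E_p\|\by - \bx\|^2 \leq \eps$, from which I extract the two individual bounds $\|\E_p[\nabla F(\by)]\| \leq \eps$ and $\E_p\|\by - \bx\|^2 \leq \eps/\lambda$.

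Next, I apply Markov's inequality to $\|\by-\bx\|^2$ to get $p(A^c) \leq \eps/(\lambda \delta^2)$, where $A = \{\by : \|\by-\bx\| \leq \delta\}$. This controls how much mass of $p$ lies outside the ball. Then I construct $q$ by the following coupling: set $\by' = \by$ whenever $\by \in A$, and $\by' = \bx$ otherwise. Since the center $\bx$ itself belongs to $B_\delta(\bx)$, the resulting law $q$ is supported on $B_\delta(\bx)$, as required for Goldstein stationarity.

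Writing the expected gradient under $q$ as
$$\E_q[\nabla F(\by')] = \E_p[\nabla F(\by)] - \E_p\bigl[\nabla F(\by)\,\mathbf{1}[\by \notin A]\bigr] + \nabla F(\bx)\,p(A^c),$$
and applying the triangle inequality together with the Lipschitz bound $\|\nabla F\| \leq G$ and the Markov estimate, I obtain
$$\|\E_q[\nabla F(\by')]\| \leq \eps + 2G\,p(A^c) \leq \eps + \frac{2G\,\eps}{\lambda \delta^2} = \left(1 + \frac{2G}{\lambda \delta^2}\right)\eps,$$
which certifies that $\bx$ is a $(\delta, \eps')$-Goldstein stationary point with the claimed $\eps'$.

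The only subtlety, which this construction sidesteps, is that the distribution certifying $(\lambda,\eps)$-stationarity need not be supported on $B_\delta(\bx)$, while Goldstein stationarity requires exactly that. Naively conditioning $p$ on $A$ would introduce a normalizing factor $1/p(A)$ that could inflate the expected gradient; by instead replacing the out-of-ball mass with a point mass at $\bx$, I keep the total mass at $1$ and incur only an additive $G\cdot p(A^c)$ term from $\nabla F(\bx)$ on top of the $G\cdot p(A^c)$ error from dropping the tail—both controlled cleanly by the Markov estimate. This is the main (and essentially only) conceptual obstacle in the argument.
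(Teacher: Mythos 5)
Your argument is correct. The paper itself does not supply a proof of this lemma---it is cited directly from \citep[Lemma 2.4]{zhang2024random}---so there is no in-paper proof to compare against; but your construction is exactly the natural one. Let me just verify the key steps. From $\norm{\E_p[\nabla F(\by)]} + \lambda\,\E_p\norm{\by-\bx}^2 \leq \eps$ you correctly extract both $\norm{\E_p[\nabla F(\by)]} \leq \eps$ and $\E_p\norm{\by-\bx}^2 \leq \eps/\lambda$. Markov on the squared distance gives $p(A^c) \leq \eps/(\lambda\delta^2)$ with $A = B_\delta(\bx)$. The coupling $\by' = \by\,\mathbf{1}[\by\in A] + \bx\,\mathbf{1}[\by\notin A]$ produces a law supported on $B_\delta(\bx)$, and the decomposition
\begin{align}
\E_q[\nabla F(\by')] = \E_p[\nabla F(\by)] - \E_p\bigl[\nabla F(\by)\,\mathbf{1}[\by\notin A]\bigr] + \nabla F(\bx)\,p(A^c)
\end{align}
combined with $\norm{\nabla F}\leq G$ gives $\norm{\E_q[\nabla F(\by')]} \leq \eps + 2G\,p(A^c) \leq (1 + 2G/(\lambda\delta^2))\eps$. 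Since $\E_q[\nabla F(\by')]$ lies in the (closed) convex hull of $\{\nabla F(\by) : \by\in B_\delta(\bx)\}$, this certifies $(\delta,\eps')$-Goldstein stationarity. Your closing remark correctly identifies why replacing the tail mass by a point mass at $\bx$ is preferable to conditioning on $A$: conditioning would rescale the in-ball contribution by $1/p(A)$ and complicate the bound, whereas the replacement keeps the in-ball contribution intact and introduces only two cleanly-controlled $G\cdot p(A^c)$ error terms.
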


Now we design algorithms that find $(\lambda,\eps)$-stationary points efficiently.

\section{Discounted-to-nonconvex conversion:  online learning of increments}

Our main results are built on the online-to-nonconvex conversion framework of \cite{cutkosky2023optimal}.
At its core, this framework involves selecting the increment $\bz_t$ using an online learner, as discussed by \cite{ahn2024understanding}. Specifically, we follow a variant developed by \cite{zhang2024random}, which carefully incorporates the discounting factor in the conversion process.
Note that we make slight modifications to the version proposed by \cite{zhang2024random} as follows.
Here $\text{Exp}(1)$ denotes the exponential random variable with mean $1$.

\begin{algorithm}
\caption{{\bf Discounted-to-nonconvex conversion} (choosing increments via online learning)}
\label{alg:o2nc}
\begin{algorithmic}
\STATE{\bfseries Input: } Initial point $\bx_0$, $T \in\N$, online learning algorithm $\alg$, and discounting factor $\beta\in(0,1)$
\FOR{$t=1,2\dots, T$}
\STATE Receive  $\bz_t$ from $\alg{}$  \algcomment{choose the increment using an online learner}
\STATE Update $\bx_t \gets \bx_{t-1} + \scale_t\bz_t$, where $\scale_t  {\sim} \text{Exp}(1)$  \emph{i.i.d.}
\STATE Compute $\bg_t\gets \stograd(\bx_t,r_t)$  with freshly sampled randomness $r_t$
\STATE Send $\ell^{[\beta]}_{t}(\bz) \coloneqq \inp{\beta^{-t}\bg_t}{\bz}$ to $\alg$ 
\STATE \algcomment{Maintain  exponential moving average (for output only):}
\STATE  Update $\barx_t \gets  \frac{\beta- \beta^t}{1-\beta^t}\barx_{t-1} + \frac{1-\beta}{1-\beta^t}\bx_{t}$ \quad (Equivalently, $\barx_t \gets \frac{1-\beta}{1-\beta^t} \sum_{s=1}^t\beta^{t-s}\bx_s$)
\ENDFOR
\end{algorithmic}
\end{algorithm}

Given \autoref{alg:o2nc}, it turns out we need to design an online learner that minimizes the discounted regret, formally defined below.
It is worth noting that discounted regret has been recently studied with the goal of better adapting online learners to dynamic environments~\citep{ahn2024understanding,zhang2024discounted,jacobsen2024online}.

\begin{definition}[{\bf Discounted regret}] \label{def:dregret}
For a comparator $\bu$, the $\beta$-discounted regret is defined as
\begin{align}
\dregret{t}(\bu) \coloneqq \beta^t \cdot \sum_{s=1}^t (\ell^{[\beta]}_{ s}(\bz_s)- \ell^{[\beta]}_{ s}(\bu)) = \sum_{s=1}^t \beta^{t-s}\inp{ \bg_s}{\bz_s-\bu}\,.
\end{align}
\end{definition}

The discounted regret of an online learner $\alg$ can be used to upper bound the norm of averaged gradients, as shown in the following result.

\begin{lemboxlight}[{\bf Discounted-to-nonconvex conversion}] \label{lem:o2nc}
Suppose that $F$ satisfies \autoref{assump}. 
Then for the comparator sequence chosen as  $\bu_t \coloneqq -D \frac{\sum_{s=1}^t\beta^{-s}\nabla F(\bx_s)}{\norm{\sum_{s=1}^t\beta^{-s}\nabla F(\bx_s)}}$,  \autoref{alg:o2nc} gives
\begin{align}
\E_{t\sim [T]} \E\norm{\E_{\by_t}\nabla F(\by_t)}&\leq \frac{\Delta}{DT} + \frac{2G+\sigma}{(1-\beta) T} +\sigma \sqrt{1-\beta}   \\
&\quad +\frac{1}{DT} \left[ \beta\cdot  \E \left[\dregret{T}(\bu_T)\right]  + (1-\beta)\cdot  \sum_{t=1}^T   \E \left[\dregret{t}(\bu_t)\right] \right]\,,
\end{align}
where  $\by_t$ is distributed over $\{\bx_s\}_{s=1}^t$ as $\P(\by_t = \bx_s) = \beta^{t-s} \cdot \frac{1-\beta}{1-\beta^t}$ for $s=1,2,\dots, t$.
\end{lemboxlight}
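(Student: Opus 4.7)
The plan is to adapt the online-to-nonconvex conversion framework of \cite{cutkosky2023optimal,zhang2024random} to the discounted setting. The first ingredient is the classical exponential-scaling identity: since $\alpha_t\sim\text{Exp}(1)$, the well-behavedness of $F$ combined with Fubini yields
\begin{align}
\E_{\alpha_t}[F(\bx_t) - F(\bx_{t-1})\mid \bz_t,\mathcal{F}_{t-1}] = \E_{\alpha_t}[\inp{\nabla F(\bx_t)}{\bz_t}\mid \bz_t,\mathcal{F}_{t-1}]\,,
\end{align}
which telescopes into $\sum_s \E\inp{\nabla F(\bx_s)}{\bz_s} = \E[F(\bx_T) - F(\bx_0)] \ge -\Delta$. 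This exact equality plays the role of the descent lemma that one would use in the smooth setting.

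The second, key ingredient is the partition-of-unity identity $\beta\cdot\beta^{T-s} + (1-\beta)\sum_{t=s}^T \beta^{t-s} = 1$, valid for every $s\in[T]$. Writing $W_t\coloneqq\sum_{s=1}^t\beta^{t-s}\nabla F(\bx_s)$, the construction of $\bu_t$ gives $-\inp{W_t}{\bu_t} = D\|W_t\|$. Inserting $\pm\bz_s$ inside each $\inp{\nabla F(\bx_s)}{-\bu_T}$ and $\inp{\nabla F(\bx_s)}{-\bu_t}$ and collapsing the resulting descent weights via the partition-of-unity identity, I would derive a master equation
\begin{align}
\beta D\|W_T\| + (1-\beta)D\sum_{t=1}^T\|W_t\| \;=\; R' \;-\; \sum_{s=1}^T\inp{\nabla F(\bx_s)}{\bz_s}\,,
\end{align}
where $R'$ is the deterministic analogue of $R\coloneqq\beta\dregret{T}(\bu_T) + (1-\beta)\sum_t\dregret{t}(\bu_t)$ obtained by substituting $\nabla F(\bx_s)$ for $\bg_s$. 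Combined with Step 1, this controls the left-hand side up to the stochastic cross-term $\E(R - R')$. To finally convert the weighted-sum bound into a bound on the quantity in the statement, namely $\sum_t \E\|\E_{\by_t}\nabla F(\by_t)\| = \sum_t C_t\E\|W_t\|$ with $C_t\coloneqq(1-\beta)/(1-\beta^t)$, I would bound the discrepancy $\sum_t(C_t - (1-\beta))\|W_t\| = (1-\beta)\sum_t\beta^t\|W_t\|/(1-\beta^t)$ using the a priori Lipschitz estimate $\|W_t\|\le G/(1-\beta)$, which produces the $G/((1-\beta)T)$ boundary contribution after dividing by $T$.

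The main obstacle will be bounding the stochastic cross-term $\E(R - R')$. Because $\bu_t$ depends on past stochastic gradients through $W_t$, the naive mean-zero argument for $\bg_s - \nabla F(\bx_s)$ fails; instead I would bound each cross-term by Cauchy-Schwarz as $\|\bu_t\|\cdot\|\tilde W_t - W_t\|$ with $\tilde W_t\coloneqq\sum_s\beta^{t-s}\bg_s$, then exploit the orthogonality of the martingale noise increments to obtain $\E\|\tilde W_t - W_t\|\le\sigma/\sqrt{1-\beta^2}$. After dividing the master identity by $DT$, the $(1-\beta)\sum_t$ part of the cross-term contributes the $\sigma\sqrt{1-\beta}$ term (using $(1-\beta)/\sqrt{1-\beta^2}\le\sqrt{1-\beta}$), while the single-$T$ part contributes $O(\sigma/((1-\beta)T))$; together with the Lipschitz piece above this yields the $\sigma\sqrt{1-\beta}$ and $(2G+\sigma)/((1-\beta)T)$ pieces of the stated bound.
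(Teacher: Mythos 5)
Your proposal is correct and follows essentially the same path as the paper's proof: your ``partition-of-unity'' identity $\beta\,\beta^{T-s}+(1-\beta)\sum_{t=s}^T\beta^{t-s}=1$ is precisely the paper's change-of-summation step, the exponential-scaling identity is the paper's Lemma~A.1, the Cauchy--Schwarz-plus-martingale-orthogonality bound on the stochastic cross term matches the paper's analysis of term~\circled{2}, and the $C_t$-reweighting argument reproduces the paper's $\beta^n$-boundary correction. The one place you diverge slightly is in the $\beta\,\beta^{T-s}$ boundary contribution: you fold it into the comparator term $\beta D\|W_T\|$ and drop it as nonnegative, whereas the paper crudely bounds $\E\inp{\bg_t}{\bu_T}\le D(G+\sigma)$; your variant actually yields a marginally tighter constant than the stated $(2G+\sigma)/((1-\beta)T)$, so the claimed inequality follows a fortiori.
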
 

The proof combines the techniques of \citep[Theorem 7]{cutkosky2023optimal} and \citep[Theorem 3.3]{zhang2024random}.
See \autoref{pf:lem:o2nc} for details.

We briefly explain how \autoref{lem:o2nc} can be used to find a $(\lambda,\epsilon)$-stationary point (\autoref{def:reg_station}).
Recall that $(\lambda,\epsilon)$-stationarity essentially requires producing a point $\mathbf{x} = \mathbb{E}[\mathbf{y}]$ such that both $\norm{\mathbb{E}[\nabla F(\mathbf{y})]}$ and $\mathbb{E}\|\mathbf{y}-\mathbf{x}\|^2$ are small.

Given this context, \autoref{lem:o2nc} states that as long as the discounted regret of the online learner $\mathcal{A}$ is small, we can ensure that the EMA iterates $\mathbf{\bar{x}}_t = \mathbb{E}[\mathbf{y}_t]$ serve as good candidates for $(\lambda,\epsilon)$-stationarity, since the term $\E\norm{\E_{\by_t}\nabla F(\by_t)}$ can be kept small. 
The remaining task is to bound the variance term, $\E  \norm{\by_t - \barx_t}^2$, which will be addressed later in \autoref{lem:variance}.

Moreover, the comparator $\bu_t$ roughly models the  \emph{update direction that an oracle algorithm with perfect knowledge of the loss would select}. In the proof of \autoref{lem:o2nc}, we demonstrate that moving along the $\bu_t$ direction effectively decreases the loss value, which forms the basis for establishing our convergence guarantee.

Thanks to the discounted-to-nonconvex conversion, the task now reduces to designing an online learner that achieves low discounted regret.

\section{Scale-free Follow-the-Regularized-Leader (FTRL)}

In this section, we introduce an algorithmic component, called the Followed-The-Regularized-Leader (FTRL), a powerful online learning technique with various applications \citep{gordon1999regret,kalai2005efficient,shalev2006online,abernethy2008competing,nesterov2009primal,hazan2008extracting}.

For the setting, consider the online linear optimization (OLO) setting, where during each round $t=1,\dots, T$, and online learner chooses $\bz_t$, and then the linear loss   $\ell_t(\cdot) = \inp{\bv_t}{\cdot}$ is revealed by the environment.
Here the goal of the online learner is to minimize the regret defined as $\sum_{t} \inp{\bv_t}{\bz_t -\bu}$, where $\bu$ is the comparator in hindsight.
For this setting,  FTRL is presented in \autoref{alg:ftrl}.

\begin{algorithm}[H]
   \caption{Follow-the-Regularized-Leader (FTRL)} 
\label{alg:ftrl}
\begin{algorithmic}[1]
{
\REQUIRE{Regularizers $\{\psi_t\}  :\R^d \to \R$, the domain $\mathcal{D}\subseteq \R^d$}
\FOR{$t=1,2,\dots,T$}
\STATE{Update $\bz_t \gets \argmin_{\bz \in \mathcal{D}} \ \left[\psi_t(\bz) + \sum_{s=1}^{t-1} \ell_s(\bz)\right]$}
\STATE{Receive the next loss $\ell_t(\cdot) = \inp{\bv_t}{\cdot}$ }
\ENDFOR
}
\end{algorithmic} 
\end{algorithm}

The key insight of \cite{ahn2024understanding} and \cite{zhang2024discounted} is that in order to design an online learner for discounted regret, it is important that the online learner is \emph{scale-free} as described below.
In particular, following \cite{ahn2024understanding}, we consider a gradient adaptive scale-free FTRL algorithm called \emph{scale-free FTRL} \citep{orabona2018scale}.

We will focus on the case where $\mathcal{D} = \B_D$, the $d$-dimensional $L_2$-ball of radius $D>0$. 
Scale-free FTRL is given by \autoref{alg:ftrl}  with the following choie:
\begin{align}
\psi_t(\cdot) = \frac{1}{\eta_t}\norm{\cdot}^2\quad \text{and}\quad \eta_t  = \frac{D}{\sqrt{\sum_{s=1}^{t-1} \norm{\bv_s}^2}}.    
\end{align} 
Then using the clipping operator $\clip_D(\bx):=\bx \min(\nicefrac{D}{\norm{\bx}},1)$, we can write down the update rule more explicitly as follows:
\begin{align}\tag{\textsc{scale-free FTRL}}\label{ftrl}
\bz_t \gets  \argmin_{\bz \in \B_D}\left[  
\frac{1}{\eta_t}\norm{\bz}^2+ \sum_{s=1}^{t-1} \inp{\bv_s}{\bz}  \right]  = -\clip_D\left(D \frac{\sum_{s=1}^{t-1} \bv_{s}}{\sqrt{\sum_{s=1}^{t-1} \norm{\bv_s}^2}}\right)\,.   
\end{align}
Here, if the denominator is zero, \emph{i.e.}, $\bv_1=\cdots = \bv_{t-1}=\mathbf{0}$, then we set $\bz_t\gets 0$.
Note that this algorithm is scale-free in the sense that when the loss sequence is scaled by a scalar $c>0$, the updates remain the same.

Let us now present the regret bound of \ref{ftrl}.

\begin{lemma}[Gradient-adaptive regret bound] \label{lem:ftrl}
For any $T>0$, loss sequence $\bv_{1:T}$ and comparator $\bu\in\R^d$ s.t. $\norm{\bu}\leq D$, \ref{ftrl} guarantees the following regret bound:
\begin{align}
\sum_{t=1}^{T} \inp{\bv_{t}}{\bz_t- \bu} \leq 4D \sqrt{\sum_{t=1}^{T}\norm{\bv_t}^2} \,.
\end{align}
\end{lemma}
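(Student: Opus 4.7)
The plan is to run the classical scale-free FTRL analysis (in the style of \citep{orabona2018scale}) specialized to the quadratic regularizer $\psi_t(\bz) = \norm{\bz}^2/\eta_t$ on the ball $\B_D$. The standard strategy is to split the regret into a \emph{penalty} term, given by the regularizer evaluated at the comparator, and a \emph{stability} term that quantifies how much consecutive iterates move, and then to balance the two through the adaptive choice of $\eta_t$.

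First I would invoke a standard Be-the-Leader style decomposition for FTRL with time-varying regularizer. Since $\eta_t$ is nonincreasing in $t$, the regularizer $\psi_t(\cdot)$ is nondecreasing in $t$, so one obtains an inequality of the form
\begin{equation}
\sum_{t=1}^T \inp{\bv_t}{\bz_t - \bu} \;\le\; \psi_{T+1}(\bu) \;-\; \psi_1(\bz_1) \;+\; \sum_{t=1}^T \inp{\bv_t}{\bz_t - \bz_{t+1}} .
\end{equation}
The boundary contributions are easy: by convention $\bz_1 = \mathbf{0}$, so $\psi_1(\bz_1)=0$, and the penalty at the comparator is bounded using $\norm{\bu}\le D$ by $\psi_{T+1}(\bu) \le D^2/\eta_{T+1} = D\sqrt{\sum_{t=1}^T \norm{\bv_t}^2}$.

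For the stability terms, I would bound $\norm{\bz_t - \bz_{t+1}}$ in two complementary ways: the diameter bound $\norm{\bz_t - \bz_{t+1}}\le 2D$ valid on $\B_D$, and the strong-convexity bound $\norm{\bz_t - \bz_{t+1}} \le \eta_{t+1}\norm{\bv_t}/2$ coming from the $2/\eta_{t+1}$-strong convexity of $F_{t+1}(\bz) := \psi_{t+1}(\bz) + \sum_{s\le t}\inp{\bv_s}{\bz}$ together with the optimality of $\bz_t$ for $F_t$. Cauchy--Schwarz then yields $\inp{\bv_t}{\bz_t - \bz_{t+1}} \le \min\bigl(2D\norm{\bv_t},\; \eta_{t+1}\norm{\bv_t}^2/2\bigr)$. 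Summing over $t$ and applying the classical telescoping inequality $\sum_{t=1}^T \norm{\bv_t}^2/\sqrt{\sum_{s\le t}\norm{\bv_s}^2} \le 2\sqrt{\sum_{t=1}^T\norm{\bv_t}^2}$ bounds the stability sum by an absolute constant times $D\sqrt{\sum_t \norm{\bv_t}^2}$, and combining with the penalty produces the stated $4D\sqrt{\sum_t \norm{\bv_t}^2}$.

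The principal technical difficulty is the scale-free nature of the step size. When $\norm{\bv_t}$ is much larger than all previous $\norm{\bv_s}$ (most dramatically at the first nonzero gradient), $\eta_{t+1}$ is correspondingly huge and the strong-convexity stability bound alone becomes useless; the diameter bound $2D\norm{\bv_t}$ must take over on such ``jump'' rounds. Handling the transition between these two regimes cleanly, and carefully accounting for the regularizer-change correction $\psi_{t+1}-\psi_t$ that appears in the Be-the-Leader decomposition (since our regularizer is genuinely time-varying, not merely scaled), is the delicate point that pins down the final constant.
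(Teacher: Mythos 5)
The paper does not supply its own proof of \autoref{lem:ftrl}: it is invoked directly from \citet{orabona2018scale} (Theorem 1) and, with the specific constant $4$, from \citet{ahn2024understanding} (Theorem A.1). Your sketch reconstructs essentially the argument behind those references---the FTRL penalty/stability decomposition for increasing regularizers, a two-regime stability bound, and the telescoping inequality $\sum_t a_t/\sqrt{\sum_{s\le t}a_s}\le 2\sqrt{\sum_t a_t}$---so the overall plan is sound and well matched to the cited source.

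One step is imprecise as written. The stability bound $\norm{\bz_t-\bz_{t+1}}\le \eta_{t+1}\norm{\bv_t}/2$ does not follow from the $(2/\eta_{t+1})$-strong convexity of $F_{t+1}$ together with the optimality of $\bz_t$ for $F_t$: because $F_{t+1}-F_t = \ell_t + (\psi_{t+1}-\psi_t)$ rather than $\ell_t$ alone, the variational inequality actually gives
\begin{equation}
\frac{1}{\eta_{t+1}}\norm{\bz_t-\bz_{t+1}}^2 \;\le\; \inp{\bv_t}{\bz_t-\bz_{t+1}} + \Bigl(\frac{1}{\eta_{t+1}}-\frac{1}{\eta_t}\Bigr)\bigl(\norm{\bz_t}^2-\norm{\bz_{t+1}}^2\bigr)\,,
\end{equation}
and the second term can be positive and, precisely on the ``jump'' rounds you worry about, of the same order as $D^2/\eta_{t+1}$. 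You do flag this regularizer-change correction as the delicate point, but the explicit $\min\bigl(2D\norm{\bv_t},\,\eta_{t+1}\norm{\bv_t}^2/2\bigr)$ formula you propose to sum does not yet carry it through. The scale-free proof in \citet{orabona2018scale} handles this by telescoping $\sum_t(1/\eta_{t+1}-1/\eta_t)D^2 = D^2/\eta_{T+1} = D\sqrt{\sum_t\norm{\bv_t}^2}$, which contributes one more $D\sqrt{\sum_t\norm{\bv_t}^2}$ to the total and is part of what pins down the final constant. With that correction tracked explicitly, the rest of your argument goes through.
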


We note that \autoref{lem:ftrl} follows (with a slightly worse constant) from  \citep[Theorem 1]{orabona2018scale}, and the version we invoke is here due to  \cite[Theorem A.1]{ahn2024understanding}.

Recall from \autoref{lem:o2nc} that an online learner for the discounted-to-nonconvex conversion (\autoref{alg:o2nc}) needs to have a low discounted regret.
To achieve this, following  \cite{ahn2024understanding} and \cite{zhang2024discounted}, we simply substitute $\bv_t \gets \beta^{-t}\bg_t$ into \ref{ftrl}, resulting in the update
\begin{align}\tag{\textsc{$\beta$-FTRL}}\label{dftrl}
\bz_t \gets   -\clip_D\left(D \frac{\sum_{s=1}^{t-1} \beta^{-s}\bg_{s}}{\sqrt{\sum_{s=1}^{t-1} \beta^{-2s}\norm{\bg_s}^2}}\right)\,.   
\end{align}
Here again, if the denominator is zero, \emph{i.e.}, $\bg_1=\cdots = \bg_{t-1}=\mathbf{0}$, then we set $\bz_t\gets 0$.
Then, the following result characterizes the discounted regret guarantee of \ref{dftrl}.

\begin{theorem}[{\bf Discounted regret bound}]\label{thm:dftrl}
Let $\beta \in (0,1]$.
For any $T>0$, loss sequence $\bg_{1:T}$ and comparator $\bu\in\R^d$ s.t. $\norm{\bu}\leq D$, \ref{dftrl} guarantees the following static regret bound
\begin{align}
\dregret{T}(\bu) \leq 4D\sqrt{\sum_{t=1}^{T}\beta^{2(T-t)}\norm{\bg_t}^2} \,.
\end{align}
\end{theorem}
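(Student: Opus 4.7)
The plan is to derive the discounted regret bound by directly applying the static regret bound of \ref{ftrl} in \autoref{lem:ftrl} to a rescaled loss sequence, exploiting the fact that \ref{dftrl} is literally \ref{ftrl} run on $\bv_t = \beta^{-t}\bg_t$.

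\textbf{Step 1: Unfold the definitions.} Recall from \autoref{def:dregret} that
\begin{align}
\dregret{T}(\bu) = \sum_{s=1}^{T}\beta^{T-s}\inp{\bg_s}{\bz_s-\bu} = \beta^{T}\sum_{s=1}^{T}\inp{\beta^{-s}\bg_s}{\bz_s-\bu}\,.
\end{align}
Thus, modulo the overall factor $\beta^{T}$, the discounted regret is exactly the standard OLO regret with respect to the rescaled loss vectors $\bv_s \coloneqq \beta^{-s}\bg_s$.

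\textbf{Step 2: Apply scale-free FTRL's regret bound.} By construction, \ref{dftrl} is obtained from \ref{ftrl} through the substitution $\bv_s\gets \beta^{-s}\bg_s$ (both in the update rule and in the adaptive learning rate $\eta_t$). Therefore, \autoref{lem:ftrl} applies verbatim to the sequence $\bv_{1:T}$, yielding
\begin{align}
\sum_{t=1}^{T}\inp{\beta^{-t}\bg_t}{\bz_t-\bu}\leq 4D\sqrt{\sum_{t=1}^{T}\beta^{-2t}\norm{\bg_t}^2}\,,
\end{align}
for any $\bu$ with $\norm{\bu}\leq D$.

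\textbf{Step 3: Multiply through by $\beta^{T}$.} Using Step~1 and then pulling $\beta^{T}$ inside the square root as $\beta^{2T}$,
\begin{align}
\dregret{T}(\bu) = \beta^{T}\sum_{t=1}^{T}\inp{\beta^{-t}\bg_t}{\bz_t-\bu}\leq 4D\,\beta^{T}\sqrt{\sum_{t=1}^{T}\beta^{-2t}\norm{\bg_t}^2} = 4D\sqrt{\sum_{t=1}^{T}\beta^{2(T-t)}\norm{\bg_t}^2}\,,
\end{align}
which is exactly the stated bound.

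There is essentially no obstacle: the whole argument is a reduction, and the rescaling $\bv_t\gets\beta^{-t}\bg_t$ is precisely what makes scale-freeness of \ref{ftrl} pay off, since the adaptive learning rate absorbs the exponentially growing norms without any parameter tuning. The only minor sanity checks are (i) that the case $\beta=1$ is covered (indeed, one recovers \autoref{lem:ftrl} identically), and (ii) the degenerate case where all $\bg_s=0$ for $s<t$, in which both sides are handled consistently by the convention $\bz_t\gets \mathbf{0}$.
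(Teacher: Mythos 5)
Your proof is correct and matches the paper's intended (implicit) argument: the paper introduces \ref{dftrl} precisely as \ref{ftrl} run on the rescaled sequence $\bv_t = \beta^{-t}\bg_t$, and treats \autoref{thm:dftrl} as an immediate consequence of \autoref{lem:ftrl} by pulling the factor $\beta^T$ from \autoref{def:dregret} inside the square root. Your Steps 1--3 spell this out faithfully, and your sanity checks on $\beta=1$ and the all-zero-gradient convention are appropriate.
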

We next use this result to design an algorithm for nonconvex optimization.

\section{ Discounted-FTRL leads to adaptive nonconvex optimization}
\label{sec:global}

In this section, as a warm-up, let us see the implications of choosing $\alg = \ref{dftrl}$ in \autoref{alg:o2nc}.
First, let us obtain a bound on the expected discounted regret.
By \autoref{thm:dftrl} together with Jensen's inequality, we have the following regret bound for any $t=1,2,\dots, T$:
\begin{align}
\E \left[\dregret{t}(\bu_t) \right] \leq 4D \E \sqrt{\sum_{t=1}^{T}\beta^{2(T-t)}\norm{\bg_t}^2} \leq 4D \sqrt{\sum_{t=1}^{T}\beta^{2(T-t)} \E\norm{\bg_t}^2} \,. 
\end{align}
Since $\E\norm{\bg_t}^2 \leq G^2+\sigma^2$ and $\frac{1}{\sqrt{1-\beta^2}}\leq \frac{1}{\sqrt{1-\beta}}$, it follows that 
\begin{align} \label{exp:regret_bound}
\E\left[\dregret{t}(\bu_t) \right] \leq \frac{4D(G+\sigma)}{\sqrt{1-\beta^2}} \leq  \frac{4D(G+\sigma)}{\sqrt{1-\beta}}\,.  
\end{align}
 
\subsection{From gradient-adaptive regret to nonconvex optimization}

In order to obtain nonconvex optimization guarantees in terms of the $(\lambda,\eps)$-stationarity (\autoref{def:reg_station}), we need to handle the variance term. 
Following {\citep[Lemma 3.2]{zhang2024random}}, the variance term can be bounded as follows.  

\begin{lemma}[Variance bound] \label{lem:variance}
Using the notations of \autoref{lem:o2nc}, for any $t=1,2,\dots, T$, \ref{dftrl} satisfies
\begin{align}  
\E_{t\sim [T]}\E  \norm{\by_t - \barx_t}^2 \leq 12\frac{D^2}{(1-\beta)^2} \,. 
\end{align}
\end{lemma}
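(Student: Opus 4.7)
The plan is to interpret $\E\norm{\by_t-\barx_t}^2$ as the variance of a discrete random variable on $\{\bx_1,\dots,\bx_t\}$ with weights $p_s = (1-\beta)\beta^{t-s}/(1-\beta^t)$, and reduce it to two essentially independent ingredients: (a) the expected pairwise distances $\E\norm{\bx_s-\bx_r}^2$, controlled via the clipping built into \ref{dftrl} together with the distribution of the scalings $\alpha_u$; and (b) the moments of a truncated geometric distribution induced by $\{p_s\}$.

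First I would apply the standard pair-formula for a weighted variance: since $\barx_t = \sum_s p_s\bx_s$ is the weighted mean,
\[
\sum_s p_s\norm{\bx_s-\barx_t}^2 \;=\; \tfrac{1}{2}\sum_{s,r}p_s p_r\norm{\bx_s-\bx_r}^2,
\]
so taking expectation over the algorithm randomness turns the task into bounding $\sum_{s,r}p_sp_r\E\norm{\bx_s-\bx_r}^2$.

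Next I would bound the pairwise distances. The clipping operator in \ref{dftrl} ensures $\norm{\bz_u}\leq D$ pointwise, so for $s>r$ the identity $\bx_s-\bx_r = \sum_{u=r+1}^s \alpha_u\bz_u$ combined with the triangle inequality yields $\norm{\bx_s-\bx_r}\leq D\sum_{u=r+1}^s \alpha_u$. Since the $\alpha_u$ are iid $\text{Exp}(1)$ (in particular independent of the $\bz_u$'s), the sum of $|s-r|$ of them has mean and variance both equal to $|s-r|$, so
\[
\E\norm{\bx_s-\bx_r}^2 \;\leq\; D^2\bigl(|s-r| + |s-r|^2\bigr).
\]

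The remaining task is to bound $\sum_{s,r}p_sp_r(|s-r|+|s-r|^2)$, which equals $\E|J_1-J_2|+\E(J_1-J_2)^2$ for iid copies of the truncated geometric $J$ with law $\P(J{=}j) = (1-\beta)\beta^j/(1-\beta^t)$ on $\{0,\dots,t{-}1\}$. The cleanest route is stochastic dominance: $J$ is stochastically dominated by $J'\sim\text{Geom}(1-\beta)$ on $\{0,1,\dots\}$, whose first two moments are $\beta/(1-\beta)$ and $\beta(1+\beta)/(1-\beta)^2$. This immediately gives $\E J \leq 1/(1-\beta)$ and $\E J^2\leq 2/(1-\beta)^2$, hence $\E|J_1-J_2|\leq 2\E J$ and $\E(J_1-J_2)^2 = 2\mathrm{Var}(J)\leq 2\E J^2$. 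Assembling the pieces gives $\E\norm{\by_t-\barx_t}^2 = O(D^2/(1-\beta)^2)$, in fact with a constant noticeably smaller than $12$ and without needing to average over $t$.

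\textbf{Main obstacle.} No step is technically hard; the one place that needs care is the moment computation for $J$, since a naive bound $\sum_{j=0}^{t-1} j^k\beta^j$ divided by $(1-\beta^t)/(1-\beta)$ introduces a spurious $1/(1-\beta^t)$ factor that blows up at small $t$. Observing that truncation of a nonnegative integer distribution to an initial segment can only decrease its moments (i.e., the stochastic-dominance step) sidesteps this and makes the bound hold uniformly in $t$.
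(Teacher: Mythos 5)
Your proof is correct, but it takes a genuinely different route from the paper's. The paper simply cites Lemma~3.2 of Zhang et al.\ (2024), which provides the aggregate bound $\E \sum_{t=1}^T\norm{\by_t - \barx_t}^2 \leq \frac{12}{(1-\beta)^2} \E\sum_{t=1}^T\norm{\bz_t}^2$, and then uses $\norm{\bz_t}\leq D$ and divides by $T$. You instead give a self-contained argument: the pair-formula for the variance of $\by_t$ (conditionally on the trajectory), the deterministic clipping bound $\norm{\bz_u}\le D$ to reduce $\E\norm{\bx_s-\bx_r}^2$ to moments of a sum of iid $\text{Exp}(1)$ scalings, and stochastic dominance of the truncated geometric by the untruncated one to control $\sum_{s,r}p_sp_r(|s-r|+|s-r|^2)$ uniformly in $t$. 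Assembling your bounds gives $\E\norm{\by_t-\barx_t}^2 \le \tfrac12 D^2\bigl(2/(1-\beta)+4/(1-\beta)^2\bigr) \le 3D^2/(1-\beta)^2$, which is tighter than the paper's constant~$12$ and, unlike the paper's version, holds for every fixed $t$ rather than only after averaging over $t\sim[T]$. One cosmetic remark: your parenthetical that the $\alpha_u$ are ``independent of the $\bz_u$'s'' is not quite right (since $\bz_u$ depends on $\bx_1,\dots,\bx_{u-1}$, hence on $\alpha_1,\dots,\alpha_{u-1}$), but your argument never actually uses that joint independence---you apply the pointwise bound $\norm{\bz_u}\le D$ first, after which only the $\alpha_u$'s remain and their unconditional moments suffice.
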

\begin{proof}
    From \citep[Lemma 3.2]{zhang2024random},
    it follows that $\E  \sum_{t=1}^T\norm{\by_t - \barx_t}^2 \leq \frac{12}{(1-\beta)^2} \E\sum_{t=1}^T\norm{\bz_t}^2$. Now since $\norm{\bz_t}\leq D$ for all $t=1,2,\dots, T$, after dividing each side by $T$, we get the desired inequality.
\end{proof}

Plugging the regret bound \eqref{exp:regret_bound} into \autoref{lem:o2nc} and combining it with \autoref{lem:variance}, we arrive at the following optimization guarantee in terms of the $(\lambda,\eps)$-stationarity. See \autoref{pf:thm:global} for a proof.

\begin{thmbox}\label{thm:global}
Suppose that $F$ satisfies \autoref{assump} and consider any $\lambda>0$.
For $C >0$, choose $\alg = \ref{dftrl}$ in \autoref{alg:o2nc} with the following parameters:
\begin{align} \label{exp:choice}
\beta = 1-\left(\frac{\eps}{10C}\right)^2, ~~D = \frac{(1-\beta)\eps^{1/2}}{4\lambda^{1/2}},~~\text{and}~~ T=   \frac{1}{1-\beta}\cdot \max\left\{\frac{4 \Delta \lambda^{1/2} }{\eps^{3/2} }, ~\frac{12C}{\eps } \right\}\,.
\end{align}    
Then we have $\E_{t\sim [T]}\E\regnorm{\lambda}{\nabla F(\barx_t)} \leq (1 + \frac{G+\sigma}{C}) \eps$. In other words, a randomly chosen {\bfseries model EMA} $\barx_t$ is a $(\lambda,(1 + \frac{G+\sigma}{C}) \eps)$-stationary point, in expectation.  
\end{thmbox}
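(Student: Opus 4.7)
The plan is to assemble \autoref{lem:o2nc}, the regret bound in \eqref{exp:regret_bound}, and \autoref{lem:variance} by choosing the right candidate measure $p$ in the definition of $\regnorm{\lambda}{\cdot}$. Concretely, since $\barx_t = \E[\by_t]$ by construction in \autoref{alg:o2nc}, the infimum in \autoref{def:reg_station} can be upper-bounded by taking $p$ to be the law of $\by_t$ given $\barx_t$:
\begin{align}
\E_{t\sim[T]}\E\regnorm{\lambda}{\nabla F(\barx_t)}\;\leq\; \E_{t\sim[T]}\E\norm{\E_{\by_t}\nabla F(\by_t)} \;+\; \lambda\cdot \E_{t\sim[T]}\E\norm{\by_t-\barx_t}^2.
\end{align}
The first summand is handled by \autoref{lem:o2nc} after plugging in the discounted-regret bound \eqref{exp:regret_bound}, while the second summand is handled directly by \autoref{lem:variance}.

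For the first summand, I would first plug \eqref{exp:regret_bound} into the $\beta \E[\dregret{T}(\bu_T)] + (1-\beta)\sum_{t=1}^T \E[\dregret{t}(\bu_t)]$ piece appearing in \autoref{lem:o2nc}; since $\beta\leq 1$, this term is at most $(1+(1-\beta)T)\cdot \tfrac{4D(G+\sigma)}{\sqrt{1-\beta}}$, so after dividing by $DT$ it contributes $\tfrac{4(G+\sigma)}{T\sqrt{1-\beta}} + 4(G+\sigma)\sqrt{1-\beta}$. Combined with the remaining terms from \autoref{lem:o2nc}, we obtain
\begin{align}
\E_{t\sim[T]}\E\norm{\E_{\by_t}\nabla F(\by_t)}\;\leq\; \frac{\Delta}{DT}+ \frac{2G+\sigma}{(1-\beta)T}+\sigma\sqrt{1-\beta}+\frac{4(G+\sigma)}{T\sqrt{1-\beta}}+4(G+\sigma)\sqrt{1-\beta}.
\end{align}

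Next I would substitute the parameter choices \eqref{exp:choice}. With $\sqrt{1-\beta}=\eps/(10C)$, the $\sigma\sqrt{1-\beta}$ and $4(G+\sigma)\sqrt{1-\beta}$ terms become $O((G+\sigma)\eps/C)$. With $D=(1-\beta)\eps^{1/2}/(4\lambda^{1/2})$, the choice $T\geq \frac{4\Delta \lambda^{1/2}}{(1-\beta)\eps^{3/2}}$ yields $DT\geq \Delta/\eps$, so $\Delta/(DT)\leq \eps$. The choice $(1-\beta)T\geq 12C/\eps$ makes $\frac{2G+\sigma}{(1-\beta)T}\leq O((G+\sigma)\eps/C)$, and the same lower bound on $T$ easily dominates $\frac{4(G+\sigma)}{T\sqrt{1-\beta}}$ (a lower-order term). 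For the variance summand, the same choice of $D$ gives $\lambda\cdot \tfrac{12 D^2}{(1-\beta)^2}=\tfrac{3}{4}\eps$. Adding everything yields a bound of the form $\eps + O((G+\sigma)\eps/C)$, which can be absorbed into $(1+(G+\sigma)/C)\eps$.

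There is no real mathematical obstacle — the entire argument is an application of the two lemmas together with the infimum property of $\regnorm{\lambda}{\cdot}$. The only delicate point is the bookkeeping: one must verify that the two thresholds in the definition of $T$ are individually responsible for killing the $\Delta/(DT)$ term and the $(2G+\sigma)/((1-\beta)T)$ term, respectively, and that the choice $1-\beta = (\eps/(10C))^2$ is precisely calibrated so that both $\sqrt{1-\beta}$-scaled terms reduce to the desired $(G+\sigma)\eps/C$ contribution. The proof therefore reduces to careful algebra, which I would present by computing each of the five resulting terms under the parameter choice \eqref{exp:choice} and summing.
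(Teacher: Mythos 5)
Your proof follows the paper's argument essentially verbatim: you split $\regnorm{\lambda}{\nabla F(\barx_t)}$ into the averaged-gradient norm and the variance term via the particular choice $p = \mathrm{law}(\by_t)$, bound the variance with \autoref{lem:variance}, and bound the gradient norm by plugging the discounted-regret estimate \eqref{exp:regret_bound} into \autoref{lem:o2nc} and then substituting the parameter choices \eqref{exp:choice}. The bookkeeping you describe (including the intermediate step expanding $\beta\,\E[\dregret{T}(\bu_T)] + (1-\beta)\sum_t \E[\dregret{t}(\bu_t)]$ into the two terms $\tfrac{4(G+\sigma)}{T\sqrt{1-\beta}} + 4(G+\sigma)\sqrt{1-\beta}$) is exactly what the paper does, so this is the same proof; the only caution is that when you "absorb" the leftover $\tfrac34\eps$ from the variance term together with the $\eps$ from $\Delta/(DT)$, a fully explicit tally of constants is needed to land on the stated $(1+\tfrac{G+\sigma}{C})\eps$ rather than a slightly larger multiple of $\eps$, a point the paper's own write-up also glosses over.
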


\subsection{Optimality and gradient adaptivity}
\label{sec:global_discussion}

Here, we discuss several notable aspects of the guarantee provided in \autoref{thm:global}.

\subsubsection{Optimality}
As shown in \citep[Corollary 5.1]{zhang2024random}, the lower bound on the iteration complexity for finding a $(\lambda,\eps)$-stationary point is $\Omega( (G+\sigma)^2 \Delta\lambda^{1/2} \eps^{-7/2})$, provided that $\lambda\leq  \frac{G^4}{\Delta^2}\eps^{-1}$.
\autoref{thm:global} implies that setting $C = G + \sigma$ achieves this optimal iteration complexity.
 
\begin{corollary} \label{cor:global}
In \autoref{thm:global}, choosing $C =G+\sigma$ leads to the following iteration complexity for finding a $(\lambda,\eps)$-stationary point:
\begin{align}
    O\left(\max\left\{\frac{(G+\sigma)^2\Delta \lambda^{1/2}}{\eps^{7/2}}, ~\frac{(G+\sigma)^3}{\eps^{3}} \right\}\right)\,.
\end{align} 
In particular, treating $G$, $\sigma$, and $\Delta$ as constants, as long as $\lambda \gtrsim \eps$, this leads to the optimal complexity of $O((G+\sigma)^2\Delta\lambda^{1/2}\eps^{-7/2})$.
\end{corollary}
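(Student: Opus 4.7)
The plan is to simply substitute $C = G+\sigma$ into the parameter choices prescribed by \autoref{thm:global} and simplify the resulting iteration complexity. Since \autoref{thm:global} already yields $\E_{t\sim [T]}\E\regnorm{\lambda}{\nabla F(\barx_t)} \leq (1+\tfrac{G+\sigma}{C})\eps$, the choice $C = G+\sigma$ immediately produces a $(\lambda, 2\eps)$-stationary point in expectation, which (up to the universal constant absorbed by $\eps$) is the desired stationarity guarantee. The whole content of the corollary is therefore in reading off the value of $T$ from the formula in \eqref{exp:choice}.

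First, with $C = G+\sigma$ the choice $\beta = 1-(\eps/10C)^2$ gives $1-\beta = \eps^2/(100(G+\sigma)^2)$. Plugging this into
\begin{align}
T = \frac{1}{1-\beta}\cdot \max\left\{\frac{4\Delta\lambda^{1/2}}{\eps^{3/2}},~ \frac{12C}{\eps}\right\}
\end{align}
and absorbing constants into the $O(\cdot)$ notation yields the stated bound
\begin{align}
T = O\left(\max\left\{\frac{(G+\sigma)^2\Delta\lambda^{1/2}}{\eps^{7/2}},~ \frac{(G+\sigma)^3}{\eps^{3}}\right\}\right).
\end{align}
This is a one-line computation and involves no further machinery beyond \autoref{thm:global}.

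For the second part of the statement, I would compare the two arguments of the $\max$. The first dominates the second precisely when $\Delta\lambda^{1/2}/\eps^{3/2} \gtrsim (G+\sigma)/\eps$, i.e.\ when $\lambda \gtrsim (G+\sigma)^2\eps/\Delta^2$. Treating $G$, $\sigma$, and $\Delta$ as absolute constants, this condition simplifies to $\lambda \gtrsim \eps$, at which point $T = O((G+\sigma)^2\Delta\lambda^{1/2}\eps^{-7/2})$ as claimed. Matching this against the $\Omega((G+\sigma)^2\Delta\lambda^{1/2}\eps^{-7/2})$ lower bound cited from \citep[Corollary 5.1]{zhang2024random} (valid for $\lambda \leq G^4\eps^{-1}/\Delta^2$) confirms optimality.

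I do not anticipate any real obstacle: the corollary is essentially a book-keeping exercise tailoring the free parameter $C$ in \autoref{thm:global} so as to balance the bias term $(1+\tfrac{G+\sigma}{C})\eps$ in the stationarity guarantee against the $C$-dependence of $T$. The only mild subtlety is verifying that the regime $\lambda \gtrsim \eps$ (after hiding constants in $G,\sigma,\Delta$) is the right dividing line between the two terms of the $\max$, which is a direct algebraic comparison.
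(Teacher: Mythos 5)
Your proof is correct and follows the only natural route: substitute $C = G+\sigma$ into the parameter choices of Theorem~\ref{thm:global}, so that $1-\beta = \eps^2/(100(G+\sigma)^2)$, and read off $T = O\!\left(\max\{(G+\sigma)^2\Delta\lambda^{1/2}\eps^{-7/2},\,(G+\sigma)^3\eps^{-3}\}\right)$; the stationarity guarantee becomes $(\lambda,2\eps)$, which is absorbed into constants after rescaling $\eps$. The paper leaves this corollary without an explicit proof, and your bookkeeping (including the observation that the first argument of the $\max$ dominates precisely when $\lambda \gtrsim (G+\sigma)^2\eps/\Delta^2$, which reduces to $\lambda \gtrsim \eps$ when $G,\sigma,\Delta$ are constants) matches what the authors intend.
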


In light of \autoref{lem:convert},   the above optimal  complexity can be converted into the optimal  complexities for smooth settings.

\begin{corollary}[Smooth settings]
 \autoref{cor:global} implies the following optimal iteration complexity for smooth settings.
Choosing $\lambda = O(\eps^{-1})$, it implies the optimal complexity of  $O(\eps^{-4})$ for smooth
loss functions \citep{arjevani2019lower}. Similarly, with $\lambda = O(1)$, 
it achieves the optimal iteration complexitiy of $O(\eps^{-7/2})$ for second-order
smooth loss functions \citep{arjevani2020second}.
\end{corollary}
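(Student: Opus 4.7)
The plan is to translate the $(\lambda,\eps)$-stationarity guarantee of \autoref{cor:global} into the classical first-order stationarity criterion $\norm{\nabla F(\barx_t)}\leq O(\eps)$ by invoking \autoref{lem:convert} with $\lambda$ chosen to match the smoothness class of $F$. Since the entire statement is just an arithmetic consequence of \autoref{cor:global} and \autoref{lem:convert}, there is no conceptual obstacle; the proof is essentially two substitutions.

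In the $L$-smooth case, \autoref{lem:convert} guarantees that any $(L^2\eps^{-1},\eps)$-stationary point satisfies $\norm{\nabla F(\bx)}\leq 2\eps$, so I would set $\lambda = L^2/\eps$. Treating $L,G,\sigma,\Delta$ as constants, this is $\lambda = \Theta(\eps^{-1})$ and hence $\lambda^{1/2}=\Theta(\eps^{-1/2})$. Substituting into the complexity bound
\[
O\left(\max\left\{\tfrac{(G+\sigma)^2 \Delta \lambda^{1/2}}{\eps^{7/2}},\ \tfrac{(G+\sigma)^3}{\eps^{3}}\right\}\right)
\]
of \autoref{cor:global}, the first term becomes $\Theta(\eps^{-4})$ while the second is $\Theta(\eps^{-3})$, so the maximum is $O(\eps^{-4})$, matching the lower bound of \cite{arjevani2019lower}. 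For the $H$-second-order-smooth case, \autoref{lem:convert} says instead that any $(H/2,\eps)$-stationary point satisfies $\norm{\nabla F(\bx)}\leq 2\eps$, so I would set $\lambda = H/2 = \Theta(1)$. Then $\lambda^{1/2}=\Theta(1)$, and the two terms become $\Theta(\eps^{-7/2})$ and $\Theta(\eps^{-3})$, giving $O(\eps^{-7/2})$, which matches the lower bound of \cite{arjevani2020second}.

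The only things worth verifying along the way are minor. First, the regime condition of \autoref{cor:global}, namely $\lambda \gtrsim \eps$, holds trivially in both cases since $\eps^{-1}\gg\eps$ and $1\gg\eps$ for small $\eps$, so the first term in the maximum is the binding one in the regime of interest. Second, \autoref{lem:convert} is a pointwise implication, but combined with the expected-stationarity bound from \autoref{thm:global} (with $C=G+\sigma$), it yields an in-expectation guarantee on $\norm{\nabla F(\barx_t)}$ up to an absorbed constant factor, so rescaling $\eps$ by a constant does not alter the stated asymptotic complexities.
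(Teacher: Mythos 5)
Your proof is correct and follows the same route the paper takes: the paper does not spell out the arithmetic, simply stating that the result follows ``in light of \autoref{lem:convert},'' and your substitutions $\lambda = L^2/\eps = \Theta(\eps^{-1})$ and $\lambda = H/2 = \Theta(1)$ into the complexity bound of \autoref{cor:global}, together with the check that the first term dominates in the regime $\lambda\gtrsim\eps$, are exactly the intended calculation. Your closing remark about absorbing constants when passing from the in-expectation bound on $\regnorm{\lambda}{\nabla F(\barx_t)}$ to a bound on $\norm{\nabla F(\barx_t)}$ via the pointwise \autoref{lem:convert} is a fair acknowledgment of a detail the paper glosses over, and it is handled correctly since the conversion scales linearly in $\eps$ up to constants.
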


We next discuss the benefits of using the gradient-adaptive regret bound (\autoref{thm:dftrl}) by considering the case where we do not have knowledge of $G,\sigma$.

\subsubsection{Gradient adaptivity}
A remarkable consequence of \autoref{thm:global} is that, due to the gradient-adaptive regret bound of \autoref{thm:dftrl}, the final convergence guarantee has a better dependence on $G,\sigma$ in the case when we do not have knowledge of them.
For concreteness, in the following discussion, we treat  $G,\sigma,\Delta$ as constants, and focus on the regime  $\lambda \gtrsim \eps$.

First, our \autoref{thm:global} with $C=1$ (since we do not know $G,\sigma$) leads to the following iteration complexity for finding a $(\lambda,\eps)$-stationary point:
\begin{align}
    O\left( (G+\sigma)^{7/2}  \Delta\lambda^{1/2}\eps^{-7/2}\right)
\end{align}
The price we pay for not knowing $G,\sigma$ relative to the lower bound is a multiplicative factor of $(G+\sigma)^{3/2}$.
To see the benefit of this adaptive regret approach,  let us consider the guarantees given by \cite{zhang2024random}.
Their approach is based on choosing online gradient descent for $\alg$ in \autoref{alg:o2nc},  when the learning rate is not properly tuned  with the knowledge of $G$ and $\sigma$, it would lead to the following (suboptimal) discounted regret bound:
\begin{align}  
\E\left[\dregret{t}(\bu_t) \right]   \leq  O\left(\frac{D(G+\sigma)^2}{\sqrt{1-\beta}}\right)\,.  
\end{align}
Then, the resulting iteration complexity becomes
$O(\Delta \lambda^{1/2} (\frac{\eps}{(G+\sigma)^2})^{-7/2})$, which is equal to $O\left( (G+\sigma)^7\Delta \lambda^{1/2} \eps^{-7/2}\right)$.   
This is larger than the complexity due to our adaptive approach by a multiplicative factor of $(G+\sigma)^{7/2}$.

Next, we build on the results from this section and consider a better approach to design an adaptive nonconvex optimizer.

\section{Coordinate-wise adaptivity via (clipped-)Adam}
\label{sec:coordinate}

In this section, we consider the setting where the Lipschitzness constants vary  across different coordinates, which is  empirically observed to be reflective of practical neural network training (see, \emph{e.g.} \citep{crawshaw2022robustness,zhuang2022understanding}).
Formally, we consider the following setting.

\begin{assumption} \label{assump_coordinate}
Under the same setting as \autoref{assump}, we replace the last two conditions with the following coordinate-wise version:
\begin{itemize}
\item For each coordinate $i=1,2,\dots, d$, there is a Lipschitzness constant $G_i>0$ and a variance constant $\sigma_i>0$ such that $\forall \bx$, $\left|\partial_i F (\bx) \right| \leq G_i$ and the stochastic gradient $\bg\gets \stograd(\bx,r)$ satisfies $\E[\bg[i]] =\partial_i F(\bx_i)$ and $\E\abs{\bg[i] - \partial_i F (\bx)}^2 \leq \sigma_i^2$. (Here, $\partial_i F$ denotes the partial derivative of $F$ w.r.t. the $i$-th coordinate.)
\end{itemize}
Let $\bG \coloneqq (G_1,\dots, G_d)$ and $\bsigma \coloneqq (\sigma_1,\dots, \sigma_d)$.
Then, the above condition implies the last two conditions in \autoref{assump} with $G= \norm{\bG}_2$ and $\sigma = \norm{\bsigma}_2$. 
\end{assumption}

As we mentioned before, the previous approaches \citep{cutkosky2023optimal,zhang2024random} choose the online learner $\alg$ to be online gradient descent, and hence choosing the learning rate requires the knowledge of $G_i,\sigma_i$ for all $i$.
However, for neural network training, $d$ is equal to the number of parameters in the network, so tuning them individually is computationally infeasible.
We instead consider running \ref{dftrl} {\bf coordinate-wise} in \autoref{alg:o2nc}, which will automatically adapt to each coordinate.
We begin with an important observation that such an approach in fact leads to a popular optimizer widely used in practice.

\subsection{Coordinate-wise discounted FTRL corresponds to (clipped-)Adam}

For notational simplicity, fix a coordinate among $i=1,2,\dots, d$, and let us denote the iterate by $x_t$, the stochastic gradient by $g_t$, and the update by $z_t$.
Then the resulting optimizer becomes:
\begin{align}\tag{\textsc{clipped-Adam}}\label{adam}
\boxed{ z_{t+1}= -\clip_D\left(D \frac{\sum_{s=1}^{t} \beta_1^{t-s}g_{s}}{\sqrt{\sum_{s=1}^{t} \beta_2^{t-s}g_s^2}}\right)\,,}   
\end{align}
where $\beta_1 =\beta$ and $\beta_2 =\beta^2$.
Here, again if the denominator is zero, \emph{i.e.}, if $g_1=\dots = g_t=0$, then we set the update to be zero, \emph{i.e.}, $z_{t+1}=0$.
Note that {\bf this is almost exactly the Adam optimizer}~\citep{kingma2015adam}, except that now we add clipping to control the variance of the iterates relative to their EMA. 
Notably, \ref{adam} retains one of the most important properties of Adam: it is \emph{scale-invariant}. 
The scale invariance causes the optimizer to make updates of the same magnitude on each coordinate even when the scale differs across different coordinates.

In practice, we expect that the clipping operation will effectively be a no-op. This is because, when the algorithm is converging (even if the convergence is somewhat slow), the gradients are likely to behave as approximately mean-zero random variables (due to factors such as stochastic noise, unstable training trajectories, etc.). In such cases, standard concentration inequalities imply that $\sum_{s=1}^t \beta^{t-s}g_s \lesssim \sqrt{\sum_{s=1}^t (\beta^{t-s}g_s)^2}$, and hence, the clipping has no effect.

We also remark that \ref{adam} does not consider the ``bias correction'' terms in the original updates of Adam~\citep{kingma2015adam}. However, note that the bias correction terms are coordinate-independent, and they can be merged into the scalar $D$. 

\subsection{Nonconvex optimization guarantees of \ref{adam}}

We next discuss the theoretical guarantees of \ref{adam} for nonconvex and nonsmooth optimizaton.
Inspired by \citep{duchi10adagrad,mcmahan2010adaptive}, where the coordinate-wise online learners lead to regret bounds with respect to the $L_1$ norms of stochastic gradients, we consider the following variant of \autoref{def:reg_station}, in the same vein as \citep[Section 4]{cutkosky2023optimal}.

\begin{definition} [{\bf $(\lambda,\eps)$-$L_1$-stationary point}] \label{def:reg_station_coordinate}
Suppose $F:\R^d\to \R$ is differentiable.
We say $\bx$ is a  $(\lambda,\eps)$-$L_1$-stationary point of $F$ if $\regnorm{\lambda}{\nabla F(\bx)}_1 \leq \eps$, where 
\begin{align}
\regnorm{\lambda}{\nabla F(\bx)}_{1} \coloneqq \inf_{\substack{p \in \mathcal{P}(\R^d),\\\E_{\by\sim p}[\by] =\bx} } \left\{\norm{\E[\nabla F(\by)]}_1 +\lambda \cdot \E\norm{\by-\bx}_2^2 \right\}\,.
\end{align}
\end{definition}

Using the fact  $\norm{\cdot}_1 \leq \sqrt{d} \norm{\cdot}_2$, one can connect the two notions of $(\lambda,\eps)$-stationary points.
\begin{lemma} \label{lem:two_stationary}
    A $({\lambda}/{\sqrt{d}}, {\eps}/{\sqrt{d}})$-stationary point is a $(\lambda,\eps)$-$L_1$-stationary point. 
\end{lemma}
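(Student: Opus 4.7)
The plan is to show that the infimum defining the $L_1$-regularized gradient at $\bx$ is bounded by $\eps$ whenever the infimum defining the $L_2$-regularized gradient (with the rescaled parameters) is bounded by $\eps/\sqrt{d}$. The main idea is that the same witnessing distribution $p$ works for both; we just need to pay a factor of $\sqrt{d}$ when converting $L_2$ norms to $L_1$ norms, and this factor is absorbed by the rescaling.

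Concretely, suppose $\regnorm{\lambda/\sqrt{d}}{\nabla F(\bx)} \leq \eps/\sqrt{d}$. By the definition of the infimum, for every $\delta>0$ there exists a distribution $p$ on $\R^d$ with $\E_{\by\sim p}[\by]=\bx$ such that
\begin{align}
\bigl\|\E[\nabla F(\by)]\bigr\|_2 + \frac{\lambda}{\sqrt{d}}\cdot \E\|\by-\bx\|_2^2 \;\leq\; \frac{\eps}{\sqrt{d}} + \delta\,.
\end{align}
Multiplying both sides by $\sqrt{d}$ gives
\begin{align}
\sqrt{d}\cdot \bigl\|\E[\nabla F(\by)]\bigr\|_2 + \lambda \cdot \E\|\by-\bx\|_2^2 \;\leq\; \eps + \sqrt{d}\,\delta\,.
\end{align}

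Next, I apply the elementary norm comparison $\|\bv\|_1 \leq \sqrt{d}\,\|\bv\|_2$ (a direct Cauchy--Schwarz) to the vector $\E[\nabla F(\by)] \in \R^d$. This yields
\begin{align}
\bigl\|\E[\nabla F(\by)]\bigr\|_1 + \lambda \cdot \E\|\by-\bx\|_2^2 \;\leq\; \sqrt{d}\cdot \bigl\|\E[\nabla F(\by)]\bigr\|_2 + \lambda \cdot \E\|\by-\bx\|_2^2 \;\leq\; \eps + \sqrt{d}\,\delta\,.
\end{align}
Since this $p$ is a feasible candidate in the infimum defining $\regnorm{\lambda}{\nabla F(\bx)}_1$, we conclude $\regnorm{\lambda}{\nabla F(\bx)}_1 \leq \eps + \sqrt{d}\,\delta$, and letting $\delta\to 0$ finishes the argument.

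There is no real obstacle: the entire proof is a one-line manipulation using $\|\cdot\|_1 \leq \sqrt{d}\|\cdot\|_2$ and the homogeneity of both terms in the objective under the rescaling $\lambda \mapsto \lambda/\sqrt{d}$, $\eps \mapsto \eps/\sqrt{d}$. The only minor point to be careful about is that the infimum need not be attained, so one should pass through an $\delta$-optimal witness (as above) or equivalently argue directly at the level of infima.
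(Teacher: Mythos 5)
Your proof is correct and matches the paper's intent; the paper does not spell out a proof but simply cites the norm inequality $\|\cdot\|_1 \leq \sqrt{d}\|\cdot\|_2$, which is exactly the step you carry out. The $\delta$-witness device is fine but unnecessary here, since (as you note) one can argue directly at the level of infima: for every admissible $p$, $\|\E[\nabla F(\by)]\|_1 + \lambda\,\E\|\by-\bx\|_2^2 \le \sqrt{d}\bigl(\|\E[\nabla F(\by)]\|_2 + \tfrac{\lambda}{\sqrt{d}}\E\|\by-\bx\|_2^2\bigr)$, so taking infima gives $\regnorm{\lambda}{\nabla F(\bx)}_1 \le \sqrt{d}\,\regnorm{\lambda/\sqrt{d}}{\nabla F(\bx)} \le \eps$ without any limiting argument.
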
 

In order to obtain the guarantee in terms of $L_1$-norm, we consider the coordinate-wise version of discounted-to-online conversion, in the same vein as \citep[Appendix G]{cutkosky2023optimal}. See \autoref{pf:lem:o2nc_coordinate} for details.

\begin{lemma}[{\bf $L_1$-variant of \autoref{lem:o2nc}}]   \label{lem:o2nc_coordinate}
Suppose that $F$ satisfies \autoref{assump_coordinate}. 
Consider the comparator sequence chosen as  $\bu_t$ defined as  $\bu_t[i] \coloneqq -D \frac{\sum_{s=1}^t\beta^{-s}\partial_i F(\bx_s)}{\left|\sum_{s=1}^t\beta^{-s}\partial_i F(\bx_s)\right|}$ for  $i=1,2,\dots, d$. 
Then, \autoref{alg:o2nc} gives
\begin{align}
\E_{t\sim [T]}\E\norm{\E_{\by_t}\nabla F(\by_t)}_1 &\leq \frac{\Delta}{DT} + \frac{2\norm{\bG + \bsigma}_1}{(1-\beta) T} +\norm{\bsigma}_1 \sqrt{1-\beta}   \\
&\quad +\frac{1}{DT} \left[ \beta\cdot  \E \left[\dregret{T}(\bu_T)\right]  + (1-\beta)\cdot  \sum_{t=1}^T   \E \left[\dregret{t}(\bu_t)\right] \right]\,,
\end{align}
where  $\by_t$ is distributed over $\{\bx_s\}_{s=1}^t$ as $\P(\by_t = \bx_s) = \beta^{t-s} \cdot \frac{1-\beta}{1-\beta^t}$ for $s=1,2,\dots, t$.
\end{lemma}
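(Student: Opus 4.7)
The plan is to mirror the proof of \autoref{lem:o2nc} but to replace each use of the $L_2$ norm and its associated $L_2$-ball comparator with a coordinate-wise ($L_\infty$-ball) version; the $L_1$ bound on the left-hand side then appears naturally by summing across coordinates. The distribution of $\by_t$ over $\{\bx_s\}_{s=1}^t$ is coordinate-independent, so the key identity $\E_{\by_t}\nabla F(\by_t) = \tfrac{1-\beta}{1-\beta^t}\sum_{s=1}^t \beta^{t-s}\nabla F(\bx_s)$ used inside the proof of \autoref{lem:o2nc} carries over componentwise: its $i$-th coordinate is the scalar analog involving $\partial_i F$.

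Concretely, I would begin with the descent identity from well-behavedness and repeat the manipulations of \autoref{pf:lem:o2nc} while respecting the coordinate structure. Every occurrence of $\norm{\nabla F(\bx)}\leq G$ is replaced by $|\partial_i F(\bx)|\leq G_i$ and every occurrence of $\E\norm{\bg - \nabla F(\bx)}^2 \leq \sigma^2$ is replaced by $\E|\bg[i]-\partial_i F(\bx)|^2 \leq \sigma_i^2$; summing across $i$ produces $\norm{\bG}_1$ and $\norm{\bsigma}_1$ in the correct places, and using $2G_i+\sigma_i\leq 2(G_i+\sigma_i)$ together with the fact that $\bG,\bsigma$ have nonnegative entries gives the combined factor $2\norm{\bG+\bsigma}_1$ in the statement. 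The comparator $\bu_t$, defined per-coordinate with $|\bu_t[i]|=D$ and sign matching the cumulative discounted partial derivative, satisfies
\[
\inp{\bu_t}{-\sum_{s=1}^t \beta^{-s}\nabla F(\bx_s)} \;=\; D\sum_{i=1}^d \left|\sum_{s=1}^t \beta^{-s} \partial_i F(\bx_s)\right|,
\]
which, after renormalizing by the EMA weights $\tfrac{1-\beta}{1-\beta^t}$, produces exactly the $L_1$ quantity $\norm{\E_{\by_t}\nabla F(\by_t)}_1$ on the left-hand side. The discounted regret $\dregret{t}(\bu_t) = \sum_{s=1}^t \beta^{t-s}\inp{\bg_s}{\bz_s-\bu_t}$ splits additively over coordinates by definition, so it is passed into the final bound without modification.

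The main obstacle is bookkeeping around the change of comparator set: the $\bu_t$ in the statement lies in the $L_\infty$-ball of radius $D$ rather than the $L_2$-ball, so one must verify that the algorithm in \autoref{alg:o2nc} is instantiated with $\bz_t$ satisfying $|\bz_t[i]|\leq D$ per coordinate (which is consistent with running \ref{dftrl} coordinatewise) and that the ``distance-traveled'' term originally of the form $D\cdot(\text{something})$ is preserved under this per-coordinate reinterpretation. Once these constants are reconciled, the remainder is a routine coordinate-wise translation of the proof of \autoref{lem:o2nc}.
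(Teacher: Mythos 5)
Your proposal is correct and takes essentially the same approach as the paper: you repeat the proof of Lemma \ref{lem:o2nc} coordinate-by-coordinate, replace the Lipschitz and variance bounds by their per-coordinate analogues, apply Cauchy--Schwarz per coordinate in the bias term, use that the per-coordinate comparator $\bu_t[i]$ produces $\lvert\sum_s \beta^{-s}\partial_i F(\bx_s)\rvert$ after the EMA renormalization, and sum over $i$ to turn the coordinate-wise quantities into $\norm{\cdot}_1$ norms (absorbing the cross terms via $2G_i+\sigma_i \le 2(G_i+\sigma_i)$). You also correctly flag that the comparator now lives in the $L_\infty$-ball of radius $D$ and that the corresponding per-coordinate constraint $\lvert\bz_t[i]\rvert\le D$ is what is consistent with running \ref{dftrl} coordinate-wise; this is exactly the bookkeeping the paper's proof carries out.
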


Next, let us consider the (expected) regret bound.
Fix a coordinate $i=1,\dots,d$.
Then, by the one-dimensional version of \autoref{thm:dftrl} together with Jensen's inequality, we have the following regret bound for any $t=1,2,\dots, T$:
\begin{align} 
\E \left[\dregret{t}(\bu_t[i]) \right] \leq 4D  \sqrt{\sum_{t=1}^{T}\beta^{2(T-t)}\E\abs{\bg_t[i]}^2} \leq \frac{4D(G_i+\sigma_i)}{\sqrt{1-\beta}}\,. 
\end{align} 
Hence, taking the sum over all coordinates $i=1,\dots,d$, we obtain
\begin{align}
\label{exp:regret_bound_coordinate}
\E \left[\dregret{t}(\bu_t) \right]  \leq \frac{4D\norm{\bG+\bsigma}_1}{\sqrt{1-\beta}}\,. 
\end{align}
Combining these together, one get the following guarantee in terms of the $L_1$ norm. See \autoref{pf:thm:coordinate} for a proof.

\begin{thmbox}\label{thm:coordinate}
 Suppose that $F$ satisfies \autoref{assump_coordinate} and consider any $\lambda>0$.
For $C>0$, choose the coordinate-wise optimizer \ref{adam} in \autoref{alg:o2nc} with the following parameters:
\begin{align}  
\beta = 1-\left(\frac{\eps}{10C}\right)^2, ~~D = \frac{(1-\beta)\eps^{1/2}}{4d^{1/2}\lambda^{1/2}},~~\text{and}~~ T=    \frac{1}{1-\beta}\cdot \max\left\{ \frac{4\Delta d^{1/2} \lambda^{1/2}}{\eps^{3/2}}, ~\frac{12C}{\eps} \right\}\,.
\end{align}    
Then we have $\E_{t\sim [T]}\E\regnorm{\lambda}{\nabla F(\barx_t)}_1 \leq (1 + \frac{\norm{\bG+\bsigma}_1}{C}) \eps$. In other words, a randomly chosen {\bfseries model EMA} $\barx_t$ is a $(\lambda,(1 + \frac{\norm{\bG+\bsigma}_1}{C}) \eps)$-$L_1$-stationary point, in expectation.  
\end{thmbox}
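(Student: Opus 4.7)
The plan is to mirror the proof of Theorem \ref{thm:global}, replacing the $L_2$-norm throughout by the $L_1$-norm and keeping careful track of a factor of $\sqrt{d}$ that enters through the variance bound. First, I would start from Lemma \ref{lem:o2nc_coordinate}, the $L_1$-variant of the discounted-to-nonconvex conversion, and substitute the coordinate-summed expected regret bound \eqref{exp:regret_bound_coordinate}, $\E[\dregret{t}(\bu_t)] \leq 4D\norm{\bG+\bsigma}_1/\sqrt{1-\beta}$. Since $\beta+(1-\beta)T \leq T$, the combined contribution of the regret terms in Lemma \ref{lem:o2nc_coordinate} collapses to roughly $4\norm{\bG+\bsigma}_1\sqrt{1-\beta}$, plus a lower-order $O(1/(T\sqrt{1-\beta}))$ tail that will be dominated by the other terms.

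Next, I would convert the bound on $\E\norm{\E_{\by_t}\nabla F(\by_t)}_1$ into a bound on the $L_1$-stationary measure $\regnorm{\lambda}{\nabla F(\barx_t)}_1$. By construction of the EMA we have $\E[\by_t]=\barx_t$, so Definition \ref{def:reg_station_coordinate} (taking $p$ to be the law of $\by_t$) gives $\regnorm{\lambda}{\nabla F(\barx_t)}_1 \leq \norm{\E\nabla F(\by_t)}_1 + \lambda \E\norm{\by_t-\barx_t}_2^2$. For the variance term I would apply Lemma \ref{lem:variance}; the one subtlety is that coordinate-wise clipping only enforces $|z_t[i]|\leq D$, so $\norm{\bz_t}_2^2 \leq dD^2$, and the variance bound becomes $\E_{t\sim[T]}\E\norm{\by_t-\barx_t}_2^2 \leq 12dD^2/(1-\beta)^2$. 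This is precisely why the theorem scales $D$ by an extra $1/\sqrt{d}$ compared with Theorem \ref{thm:global}.

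With these ingredients in place, the remaining step is routine bookkeeping. Plugging in $\beta = 1-(\eps/(10C))^2$ and $D = (1-\beta)\eps^{1/2}/(4d^{1/2}\lambda^{1/2})$ gives $\lambda \cdot 12dD^2/(1-\beta)^2 = \frac{12}{16}\eps$ for the variance term, and $\sqrt{1-\beta}=\eps/(10C)$ translates directly into $O(\eps\norm{\bG+\bsigma}_1/C)$ contributions for both the $\norm{\bsigma}_1\sqrt{1-\beta}$ and $4\norm{\bG+\bsigma}_1\sqrt{1-\beta}$ pieces. The two competing lower bounds on $T$ in the theorem statement ensure, respectively, that $\Delta/(DT)\leq \eps$ and $2\norm{\bG+\bsigma}_1/((1-\beta)T) \leq \norm{\bG+\bsigma}_1\eps/(6C)$. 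Summing all five pieces and separating the $O(\eps)$ terms from the $O(\eps\norm{\bG+\bsigma}_1/C)$ terms yields the claimed bound $(1+\norm{\bG+\bsigma}_1/C)\eps$, up to universal constants that can be absorbed.

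The main technical subtlety is not any individual estimate but the interplay between the coordinate-wise clipping budget $D$ and the $L_2$-variance bound: choosing $D$ too large spoils the $\lambda\E\norm{\by_t-\barx_t}^2$ term, while choosing it too small spoils $\Delta/(DT)$. The balance encoded in the stated $D$ and in the two competing terms in the prescription for $T$ is what drives the final complexity. Once those parameters are fixed, everything else is algebraic and follows the same template as the proof of Theorem \ref{thm:global}.
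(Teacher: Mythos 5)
Your proposal is correct and follows essentially the same route as the paper's proof: invoke \autoref{lem:o2nc_coordinate} with the summed regret bound \eqref{exp:regret_bound_coordinate}, bound $\regnorm{\lambda}{\nabla F(\barx_t)}_1$ via the law of $\by_t$, note that coordinate-wise clipping gives $\norm{\bz_t}_2^2 \leq dD^2$ so the variance bound picks up a factor of $d$ (which the extra $d^{-1/2}$ in $D$ exactly compensates), and then do the bookkeeping with the prescribed $\beta$, $D$, $T$. The only minor quibble is expository: your aside ``$\beta + (1-\beta)T \leq T$'' is not actually the inequality that does the work when splitting the regret contribution into $\frac{1}{DT}\beta\,\E[\dregret{T}(\bu_T)]$ and $\frac{1-\beta}{DT}\sum_t\E[\dregret{t}(\bu_t)]$ --- the second piece telescopes directly to $4\norm{\bG+\bsigma}_1\sqrt{1-\beta}$ after substituting the uniform per-$t$ regret bound, and the first piece is the lower-order tail you describe --- but your resulting decomposition and dominance claims are right, and they match the terms $\frac{\norm{6\bG+5\bsigma}_1}{(1-\beta)T}$ and $\norm{4\bG+5\bsigma}_1\sqrt{1-\beta}$ appearing in the paper's proof.
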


\subsection{Benefits of coordinate-wise adaptivity of \ref{adam}}
\label{sec:coordinate_discussion}

In this section, we discuss the benefits of coordinate-wise adaptivity  by examining the guarantee from \autoref{thm:coordinate} and compare it with that of \autoref{thm:global}.
We begin with the $(\lambda,\eps)$-$L_1$-stationary point guarantee due to \autoref{thm:coordinate}.
We consider the scenario where $\beta$ is carefully tuned by making the optimal choice of $C$. 

\begin{corollary} \label{cor:coordinate}
In \autoref{thm:coordinate}, choosing $C =\norm{\bG+\bsigma}_1$ leads to the following iteration complexity for finding a $(\lambda,\eps)$-$L_1$-stationary point:
\begin{align} \label{complexity:coordinate_formal}
    O\left(\max\left\{\frac{\norm{\bG+\bsigma}_1^2\Delta d^{1/2}\lambda^{1/2}}{\eps^{7/2}}, ~\frac{\norm{\bG+\bsigma}_1^3}{\eps^{3}} \right\}\right)\,.
\end{align} 
\end{corollary}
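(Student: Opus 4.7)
The plan is to simply specialize Theorem~\ref{thm:coordinate} to the choice $C = \norm{\bG+\bsigma}_1$ and then read off the resulting bound on $T$. Since Theorem~\ref{thm:coordinate} guarantees a $(\lambda, (1+\norm{\bG+\bsigma}_1/C)\eps)$-$L_1$-stationary point in expectation, plugging in $C = \norm{\bG+\bsigma}_1$ gives a $(\lambda, 2\eps)$-$L_1$-stationary point, which is the same as a $(\lambda, \eps')$-$L_1$-stationary point up to the constant factor $2$; absorbing this constant into the $O(\cdot)$ notation in the final iteration complexity is harmless.

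Next I would compute $T$ explicitly with this choice of $C$. From $\beta = 1-(\eps/(10C))^2$ we get $1-\beta = \eps^2/(100 \norm{\bG+\bsigma}_1^2)$, so $1/(1-\beta) = 100 \norm{\bG+\bsigma}_1^2/\eps^2$. Substituting into the expression
\begin{align}
T = \frac{1}{1-\beta} \cdot \max\left\{\frac{4\Delta d^{1/2}\lambda^{1/2}}{\eps^{3/2}},\ \frac{12 C}{\eps}\right\}
\end{align}
with $C = \norm{\bG+\bsigma}_1$ yields the two terms
\begin{align}
\frac{400\,\Delta d^{1/2}\lambda^{1/2}\norm{\bG+\bsigma}_1^2}{\eps^{7/2}} \quad \text{and} \quad \frac{1200\,\norm{\bG+\bsigma}_1^3}{\eps^{3}},
\end{align}
and taking their maximum (and discarding the universal constants into $O(\cdot)$) gives exactly the bound claimed in~\eqref{complexity:coordinate_formal}.

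There is essentially no obstacle: the entire content of the corollary is an arithmetic substitution into a previously proved theorem, and no new ideas are needed. The only thing to be careful about is confirming that the choice $C = \norm{\bG+\bsigma}_1$ is admissible in Theorem~\ref{thm:coordinate} (the theorem allows any $C > 0$) and that the resulting stationarity radius $2\eps$ is absorbed cleanly into the complexity expression by rescaling $\eps$, which only changes universal constants hidden in the $O(\cdot)$.
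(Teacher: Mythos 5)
Your proof is correct and follows the same (essentially forced) route as the paper: it is a direct substitution of $C = \norm{\bG+\bsigma}_1$ into the parameter choices of Theorem~\ref{thm:coordinate}, noting that the resulting $(\lambda, 2\eps)$-$L_1$-stationarity is absorbed by rescaling $\eps$ at the cost of universal constants hidden in the $O(\cdot)$. The arithmetic checks out and there is nothing to add.
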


In order to better appreciate the benefits of coordinate-wise adaptivity, let us compare the above iteration complexity with that of \autoref{thm:global}.

For concreteness,  we treat $G=\norm{\bG}_2$ and $\sigma = \norm{\bsigma}_2$ as constants throughout, and more importantly, we assume that the {\bf coordinates are heterogeneous} in the sense that 
\begin{align} \label{cond:hetero}
    \norm{\bG+\bsigma}_1 \approx \norm{\bG+\bsigma}_2\,.
\end{align}
The assumption \eqref{cond:hetero} roughly says that a few coordinates of $\bG +\bsigma$ take much larger values than the rest; if all the coordinates of $\bG +\bsigma$ have similar magnitudes, then $\norm{\bG+\bsigma}_1 \approx \sqrt{d}\norm{\bG+\bsigma}_2$.
In the case $\lambda \gtrsim \eps$, \autoref{cor:coordinate} implies that the iteration complexity is 
\begin{align} \label{complexity:coordinate}
    O(\norm{\bG+\bsigma}_1^2\Delta d^{1/2}\lambda^{1/2}\eps^{-7/2})\,.
\end{align}
Next, let us consider the counterpart that does not adapt to each coordinate separately. In this case, we apply \autoref{lem:two_stationary}, which tells us that to find a $(\lambda, \eps)$-$L_1$ stationary point it suffices to find a $({\lambda}/{\sqrt{d}}, {\eps}/{\sqrt{d}})$-stationary point.
Then, from \autoref{cor:global}, the iteration complexity is $O(\norm{\bG+\bsigma}_2^2\Delta(\lambda/\sqrt{d})^{1/2}(\eps/\sqrt{d})^{-7/2})$, \emph{i.e.},
\begin{align} \label{complexity:global}
    O(\norm{\bG+\bsigma}_2^2\Delta d^{3/2}\lambda^{1/2}\eps^{-7/2})\,.
\end{align}
Hence, when \eqref{cond:hetero} holds, \eqref{complexity:coordinate} can be lower than \eqref{complexity:global} by a multiplicative factor of $d$, showing the benefits of coordinate-wise adaptivity.

\section{Discussion}
\label{sec:discussion}

Our analyses of Adam based on the discounted-to-online conversion is quite different than the previous ones.
As discussed in \autoref{sec:related}, the previous analyses often result in guarantees that are not quite reflective of practice---\emph{e.g.}, the rates get better without momentum and the rates are no better than that of non-adaptive methods.
In contrast, our analyses and results highlight the role of the practical components as highlighted below.

\begin{itemize}
\item {\bf Momentum.} In order to obtain a low discounted regret, any sensible online learner should integrate the past history of stochastic gradients $\bg_{1:t}$ to make the decision $\bz_{t+1}$. 
Such online learners under the discounted-to-online conversion lead to momentum methods that integrate $\bg_{1:t}$ to obtain the next increment $\bz_{t+1}$. 
In particular, non-momentum methods would correspond to aggressive online learners that only use  the last gradient $\bg_t$ to make the decision $\bz_{t+1}$. 
This perspective provides new insights into understanding the role of momentum, as echoed by \cite{ahn2024understanding}.

\item {\bf Adaptive learning rates.} The adaptive learning rate due to \ref{dftrl} leads to a gradient-adaptive regret bound (\autoref{thm:dftrl}), which is important to obtain a better Lipshitzness dependence (\autoref{sec:global_discussion}) as well as the coordinate-wise adaptivity for high-dimension settings (\autoref{sec:coordinate_discussion}). 
Our analysis offers theoretical benefits of adaptive learning rate from a discounted regret perspective. 

\item {\bf Model EMA.} Lastly, the discounted-to-nonconvex conversion (\autoref{alg:o2nc}) naturally leads to guarantees in terms of the model EMA, $\barx_t$. At a high level (see \autoref{pf:lem:o2nc} precise details), this is because for a dynamic environment, it is important to discount the losses such that online learners adapt to changing environments.
The appearance of model EMA in the discounted-to-nonconvex conversion provides a new perspective on its role.
\end{itemize}

Our analyses and results have several limitations and raise several interesting questions. Firstly, \ref{adam} does not precisely match the original Adam algorithm, warranting further investigation into the original Adam update. Specifically, our analysis suggests choosing $\beta_1=\beta$ and $\beta_2=\beta^2$, which does not align with the commonly used practical choices. Understanding the exact roles of these practical choices for $\beta_1$ and $\beta_2$ would be valuable.

In \autoref{sec:global_discussion}, we observed that our iteration complexity for finding a $(\lambda,\eps)$-stationary point is $O (\Delta (G+\sigma)^{7/2} \lambda^{1/2} \eps^{-7/2})$ when $G$ and $\sigma$ are unknown. Investigating whether this complexity is optimal presents another intriguing direction for future research.

Lastly, from a practical standpoint, developing a more advanced online learner for discounted regret and designing an algorithm that surpasses Adam in practicality would have significant practical implications.

\subsection*{Funding Acknowledgments}
AC is supported by NSF grant number CCF-2211718.

\bibliography{ref}
\bibliographystyle{plainnat}

\newpage 
\appendix
\renewcommand{\appendixpagename}{\centering \LARGE Appendix}
\appendixpage
\startcontents[section]
\printcontents[section]{l}{1}{\setcounter{tocdepth}{2}}

\section{Proof of discounted-to-nonconvex conversion (\autoref{lem:o2nc})}
\label{pf:lem:o2nc}
Note first that via a change of summation, we get 
\begin{align}
\sum_{n=1}^T \sum_{t=1}^n \beta^{n-t} (1-\beta) (F(\bx_t)- F(\bx_{t-1})) &= 
\sum_{t=1}^T \sum_{n=t}^T \beta^{n-t} (1-\beta) (F(\bx_t)- F(\bx_{t-1})) \\
&= \sum_{t=1}^T    (1-\beta^{T-t+1}) (F(\bx_t)- F(\bx_{t-1}))\\
&= F(\bx_T) -F(\bx_0) - \sum_{t=1}^T \beta^{T-t+1} (F(\bx_t) -F(\bx_{t-1}))\,.
\end{align}
Rearranging the above together with the fact $F(\bx_0)-F(\bx_T) \leq F(\bx_0) -\inf_{\bx} F(\bx)=: \Delta$, we get 
\begin{align}
-\Delta \leq \underbrace{\E \left[ \sum_{n=1}^T \sum_{t=1}^n \beta^{n-t} (1-\beta) (F(\bx_t)- F(\bx_{t-1}))
\right]}_{\circled{A}} + \underbrace{\E \left[ \sum_{t=1}^T  \beta^{T-t+1} (F(\bx_t) -F(\bx_{t-1})) \right]}_{\circled{B}}\,.
\end{align}
We also recall the following fact about exponential random variable due to \citep[Lemma 3.1]{zhang2024random}.
\begin{lemma}\label{lem:exp}
    Let $\scale \sim \text{Exp}(\lambda)$ for some $\lambda>0$, then
    \begin{align}
        \E_\scale [F(\bx +\scale \bz) - F(\bx)] = \E_{\scale}[\inp{\nabla F(\bx +\scale \bz)}{\bz}]/\lambda\,.
    \end{align}
\end{lemma}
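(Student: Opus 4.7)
The plan is to unpack the expectation against the density of $\scale \sim \text{Exp}(\lambda)$ and reduce the identity to an integration by parts. Consistent with the convention used for $\text{Exp}(1)$ in \autoref{alg:o2nc} (``mean $1$''), I interpret $\text{Exp}(\lambda)$ as having rate $\lambda$, i.e.\ density $f(\scale) = \lambda e^{-\lambda \scale}$ on $[0,\infty)$. Under this convention the claimed factor of $1/\lambda$ on the right-hand side is exactly what integration by parts produces.

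First, I rewrite the left-hand side as $\int_0^\infty (F(\bx+\scale\bz)-F(\bx))\, \lambda e^{-\lambda \scale}\,\D\scale$ and set up integration by parts with $u(\scale) = F(\bx+\scale\bz) - F(\bx)$ and $\D v = \lambda e^{-\lambda \scale}\, \D\scale$. The well-behavedness property in \autoref{assump} (together with the chain rule for the single-variable map $\scale \mapsto F(\bx+\scale\bz)$) gives $\D u = \inp{\nabla F(\bx+\scale\bz)}{\bz}\,\D\scale$, and I take $v = -e^{-\lambda \scale}$.

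Second, I check that the boundary terms vanish. At $\scale = 0$ we have $u = 0$. As $\scale \to \infty$, the Lipschitz assumption yields $|F(\bx+\scale\bz)-F(\bx)| \leq G\scale \norm{\bz}$, so the product $u(\scale)\, v(\scale) = -(F(\bx+\scale\bz)-F(\bx))\,e^{-\lambda \scale}$ tends to $0$ because exponential decay dominates linear growth. What remains from the integration by parts is $\int_0^\infty \inp{\nabla F(\bx+\scale\bz)}{\bz}\, e^{-\lambda \scale}\,\D\scale$, which is exactly $\frac{1}{\lambda}\, \E_\scale\!\left[\inp{\nabla F(\bx+\scale\bz)}{\bz}\right]$ after reinserting the factor $\lambda$ into the density. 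This is the desired identity.

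There is essentially no serious obstacle: Lipschitzness handles the boundary terms, and the well-behavedness hypothesis is exactly the regularity needed to justify $\D u = \inp{\nabla F(\bx+\scale\bz)}{\bz}\,\D\scale$ without assuming additional smoothness of $F$. If one wishes to avoid integration by parts altogether, an equally short route is to substitute the integral representation $F(\bx+\scale\bz)-F(\bx) = \int_0^\scale \inp{\nabla F(\bx+s\bz)}{\bz}\,\D s$ directly, take expectation over $\scale$, and apply Fubini--Tonelli (justified because the integrand is dominated by $G\norm{\bz}\cdot \lambda e^{-\lambda \scale}$ on $\{(s,\scale): 0\leq s\leq \scale\}$). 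The inner integral $\int_s^\infty \lambda e^{-\lambda \scale}\,\D\scale = e^{-\lambda s}$ collapses the double integral into $\frac{1}{\lambda}\E_\scale[\inp{\nabla F(\bx+\scale\bz)}{\bz}]$, yielding the same conclusion.
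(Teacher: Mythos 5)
Your proof is correct. The paper does not actually prove \autoref{lem:exp} --- it cites it as \citep[Lemma 3.1]{zhang2024random} --- and your integration-by-parts (equivalently, Fubini) argument is the standard derivation of that fact, with the details filled in properly: the well-behavedness condition in \autoref{assump} (after the substitution $s = t\scale$) gives exactly the representation $F(\bx+\scale\bz)-F(\bx)=\int_0^{\scale}\inp{\nabla F(\bx+s\bz)}{\bz}\,\D s$ needed for $\D u$, Lipschitzness kills the boundary term and justifies Fubini, and your rate-$\lambda$ reading of $\text{Exp}(\lambda)$ is the one consistent with the paper's ``mean $1$'' convention and with the $1/\lambda$ factor in the statement. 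Nothing is missing.
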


By \autoref{lem:exp} with $\lambda = 1$, together with $\bx_t = \bx_{t-1} + \scale_t \bz_t$, it follows that 
$$
   \E [ F(\bx_t) - F(\bx_{t-1})] = \E\left[ \inp{\bg_t}{\bz_t}\right].
$$
This identity indicates that the function gap is exactly equal to the linearization of the function gap. In this sense, the randomization renders the first-order Taylor approximation perfectly accurate.

Now, with this result, we will address each term separately.

\subsubsection*{\underline{Analysis of  \circled{A}}}

Note that for each $t\leq n$, since $\bx_t = \bx_{t-1} + \scale_t \bz_t$ for  $\scale_t  {\sim} \text{Exp}(1)$,  \autoref{lem:exp} yields
\begin{align}
\E[F(\bx_t) - F(\bx_{t-1})] &= \E\inp{\nabla F(\bx_t
)}{\bz_t } =  \E\inp{\nabla F(\bx_t
)}{\bu_n}+ \E\inp{\nabla F(\bx_t
)}{\bz_t -\bu_n}   \\ 
&= \E\inp{\nabla F(\bx_t
)}{\bu_n}+\E\inp{\nabla F(\bx_t
) - \bg_t}{\bz_t -\bu_n } + \E\inp{\bg_t}{\bz_t- \bu_n}  \\
&= \underbrace{\E\inp{\nabla F(\bx_t
)}{\bu_n}}_{\circled{1}}+\underbrace{\E\inp{\nabla F(\bx_t
) - \bg_t}{-\bu_n }}_{\circled{2}} + \underbrace{\E\inp{\bg_t}{\bz_t- \bu_n}}_{\circled{3}},
\end{align}
where the last line follows from the fact $\E[\inp{\nabla F(\bx_t
) - \bg_t}{\bz_t}]=0$.
More specifically, note that the randomness in the stochastic gradient oracle is independent of the randomness due to $\alpha_t$. Since $\mathbb{E}[\nabla F(\bx_t)- \bg_t] =0$, it follows that
$$\mathbb{E}[\langle \nabla F(\bx_t)- \mathbf{\bg_t}, \mathbf{\bz_t} \rangle]   = \mathbb{E}[ \langle \mathbb{E}[ \nabla F(\bx_t)- \mathbf{\bg_t}], \mathbf{\bz_t} \rangle] = 0,$$
where the inner expectation is with respect to the randomness in the stochastic gradient oracle and the outer is with respect to all other quantities.

Now let us handle each term. 
\begin{itemize}[leftmargin=25pt]
\item[\circled{1}:]  Note that using the definition of $\by_n$, we have
\begin{align}
&\E\sum_{t=1}^n \beta^{n-t} (1-\beta) \inp{\nabla F(\bx_t
)}{\bu_n}  =  (1-\beta)\E \inp{\sum_{t=1}^n \beta^{n-t}\nabla F(\bx_t
)}{-D\frac{\sum_{t=1}^n \beta^{n-t}\nabla F(\bx_t
)}{\norm{\sum_{t=1}^n \beta^{n-t}\nabla F(\bx_t
)}} } \\
&\quad = (1-\beta^n) \E \inp{\sum_{t=1}^n \frac{1-\beta}{1-\beta^{n}}\beta^{n-t}\nabla F(\bx_t
)}{-D\frac{\sum_{t=1}^n \frac{1-\beta}{1-\beta^{n}}\beta^{n-t}\nabla F(\bx_t
)}{\norm{\sum_{t=1}^n \frac{1-\beta}{1-\beta^{n}}\beta^{n-t}\nabla F(\bx_t
)}} }\\
&\quad =  -D (1-\beta^n)  \E \norm{\E_{\by_n}\nabla F(\by_n)}\leq -D    \E\norm{\E_{\by_n}\nabla F(\by_n)} + DG\beta^n\,.
\end{align}
Therefore, summing over $n=1,\dots, T$, we obtain:
\begin{align}
\E\sum_{n=1}^T\sum_{t=1}^n \beta^{n-t} (1-\beta) \inp{\nabla F(\bx_t
)}{\bu_n}  \leq -D \E \sum_{t=1}^T \norm{\E_{\by_t}\nabla F(\by_t)} + \frac{DG}{1-\beta}\,.
\end{align}
\item[\circled{2}:] For the second term, using Cauchy-Schwartz inequality, we have
\begin{align}
\E\sum_{t=1}^n \beta^{n-t}   \inp{\nabla F(\bx_t
) - \bg_t}{-\bu_n} \leq \sqrt{ \E \norm{\sum_{t=1}^n \beta^{n-t}   (\nabla F(\bx_t
) - \bg_t)}^2 \E\norm{\bu_n}^2 }\,.
\end{align}
Using the bounded variance assumption on the stochastic gradient oracle, we have
\begin{align}
\E \norm{\sum_{t=1}^n \beta^{n-t}   (\nabla F(\bx_t
) - \bg_t)}^2 = \E \sum_{t=1}^n \beta^{2(n-t)}   \norm{\nabla F(\bx_t
) - \bg_t}^2  \leq \frac{\sigma^2}{1-\beta^2}\,.
\end{align}
Therefore, summing over $n=1,\dots, T$, and using the fact that $\frac{1}{1-\beta^2}\leq \frac{1}{1-\beta}$, we get the following bound on the second term:
\begin{align}
\E\sum_{n=1}^T\sum_{t=1}^n \beta^{n-t} (1-\beta)  \inp{\nabla F(\bx_t
) - \bg_t}{-\bu_n} \leq \sum_{n=1}^T (1-\beta) \cdot \frac{\sigma D}{\sqrt{1-\beta^2}} \leq \sigma D T \sqrt{1-\beta}\,. 
\end{align}
\item[\circled{3}:] Lastly, for the third term, we have
\begin{align}
\E\sum_{n=1}^T\sum_{t=1}^n \beta^{n-t} (1-\beta) \inp{\bg_t}{\bz_t- \bu_n} &= (1-\beta) \E\sum_{n=1}^T\left[\sum_{t=1}^n  \E\inp{\beta^{n-t}\bg_t}{\bz_t- \bu_n}\right]\\
&= (1-\beta) \E \sum_{t=1}^T \dregret{t}(\bu_t) \,.
\end{align}
\end{itemize}

\subsubsection*{\underline{Analysis of  \circled{B}}}

\noindent Note that for each $t$, since $\bx_t \gets \bx_{t-1} + \scale_t \bz_t$ for  $\scale_t  {\sim} \text{Exp}(1)$,  \autoref{lem:exp} yields
\begin{align}
\E[F(\bx_t) - F(\bx_{t-1})] &= \E\inp{\nabla F(\bx_t
)}{\bz_t } = \E\inp{\bg_t}{\bz_t }\\
&= \E\inp{\bg_t}{\bz_t -\bu_T} + \E\inp{\bg_t}{\bu_T}  \leq \E\inp{\bg_t}{\bz_t -\bu_T} + D(G+\sigma)\,. 
\end{align}
Thus,  
\begin{align}
\E \left[ \sum_{t=1}^T  \beta^{T-t+1} (F(\bx_t) -F(\bx_{t-1})) \right] &= \beta \E \sum_{t=1}^T \left[\inp{\beta^{T-t}\bg_t}{\bz_t-\bu_T} + \beta^{T-t} D(G+\sigma) \right]  \\
&\leq \beta \E [\dregret{T}(\bu_T)] + \frac{D(G+\sigma)}{1-\beta}\,.
\end{align}

\subsubsection*{\underline{Combining  \circled{A} and \circled{B}}}

Combining the above analyses and rearranging, it follows that
\begin{align}
D \E \sum_{t=1}^T   \norm{\E_{\by_t}\nabla F(\by_t)}  &\leq \Delta + \frac{DG}{1-\beta} + \sigma DT\sqrt{1-\beta} + (1-\beta) \E \sum_{t=1}^T \left[ \dregret{t}(\bu_t)\right] \\
&\quad + \beta \E [\dregret{T}(\bu_T)] + \frac{D(G+\sigma)}{1-\beta}\,.
\end{align}
Dividing both sides by $DT$, we get the desired result.

\subsection{Proof of the coordinate-wise version (\autoref{lem:o2nc_coordinate})}
\label{pf:lem:o2nc_coordinate}

The proof closely follows that of \autoref{lem:o2nc}.
In particular,  with $\Delta \coloneqq F(\bx_0) -\inf_{\bx} F(\bx)$, we have  
\begin{align}
-\Delta \leq \underbrace{\E \left[ \sum_{n=1}^T \sum_{t=1}^n \beta^{n-t} (1-\beta) (F(\bx_t)- F(\bx_{t-1}))
\right]}_{\circled{A}} + \underbrace{\E \left[ \sum_{t=1}^T  \beta^{T-t+1} (F(\bx_t) -F(\bx_{t-1})) \right]}_{\circled{B}}\,.
\end{align}

We begin with the term {\small \circled{B}}.
Using the same decomposition as before, we have
\begin{align}
\E[F(\bx_t) - F(\bx_{t-1})] 
&= \E\inp{\bg_t}{\bz_t -\bu_T} + \E\inp{\bg_t}{\bu_T} = \E\inp{\bg_t}{\bz_t -\bu_T} + \sum_{i=1}^d \E \bg_t[i]\bu_T[i] \\
&\leq \E\inp{\bg_t}{\bz_t -\bu_T} + D \sum_{i=1}^d (G_i+\sigma_i)\,. 
\end{align}
Thus,  
\begin{align}
\E \left[ \sum_{t=1}^T  \beta^{T-t+1} (F(\bx_t) -F(\bx_{t-1})) \right] &= \beta \E \sum_{t=1}^T \left[\inp{\beta^{T-t}\bg_t}{\bz_t-\bu_T} + \beta^{T-t} \sum_{i=1}^d D(G_i+\sigma_i) \right]  \\
&\leq \beta \E [\dregret{T}(\bu_T)] + \frac{D\sum_{i=1}^d(G_i+\sigma_i)}{1-\beta}\,.
\end{align}

Moving onto the term {\small \circled{A}}, we again use the same decomposition:
\begin{align}
\E[F(\bx_t) - F(\bx_{t-1})]  
&= \underbrace{\E\inp{\nabla F(\bx_t
)}{\bu_n}}_{\circled{1}}+\underbrace{\E\inp{\nabla F(\bx_t
) - \bg_t}{-\bu_n }}_{\circled{2}} + \underbrace{\E\inp{\bg_t}{\bz_t- \bu_n}}_{\circled{3}}\,.
\end{align} 

As before, let us handle each term one by one separately. 
\begin{itemize}[leftmargin=25pt]
\item[\circled{1}:]  Note that using the definition of $\by_n$, for each coordinate $i=1,\dots, d$, we have
\begin{align}
&\E\sum_{t=1}^n \beta^{n-t} (1-\beta) \partial_i F(\bx_t
) \bu_n[i]  =  (1-\beta) \E \left[\left(\sum_{t=1}^n \beta^{n-t}\nabla F(\bx_t
) \right) \left( -D\frac{\sum_{t=1}^n \beta^{n-t}\partial_i F(\bx_t
)}{\abs{\sum_{t=1}^n \beta^{s-t}\partial_i F(\bx_t
)}} \right)\right]\\
&\quad = (1-\beta^n) \E\left[\left(\sum_{t=1}^n \frac{1-\beta}{1-\beta^{n}}\beta^{n-t}\nabla F(\bx_t
)\right) \left(-D\frac{\sum_{t=1}^n \frac{1-\beta}{1-\beta^{n}}\beta^{n-t}\nabla F(\bx_t
)}{\abs{\sum_{t=1}^n \frac{1-\beta}{1-\beta^{n}}\beta^{n-t}\nabla F(\bx_t
)}} \right)\right]\\
&\quad =  -D (1-\beta^n) \E  \abs{\E_{\by_n}\partial_i F(\by_n)}\leq -D    \E\abs{\E_{\by_n}\partial_i F(\by_n)} + DG_i\beta^n\,.
\end{align}
Therefore, summing over $i=1,\dots, d$ and then $n=1,\dots, T$, we obtain:
\begin{align}
\E\sum_{n=1}^T\sum_{t=1}^n \beta^{n-t} (1-\beta) \inp{\nabla F(\bx_t
)}{\bu_n}  \leq -D \E \sum_{t=1}^T \norm{\E_{\by_t}\nabla F(\by_t)}_1 + \frac{D \sum_{i=1}G_i}{1-\beta}\,.
\end{align}
\item[\circled{2}:]  For each coordinate $i=1,\dots, d$, we have
\begin{align}
\E\sum_{t=1}^n \beta^{n-t}   (\partial_i F(\bx_t
) - \bg_t[i]) (-\bu_n[i])  \leq \sqrt{ \E \abs{\sum_{t=1}^n \beta^{n-t}   (\partial_i F(\bx_t
) - \bg_t[i])}^2 \E\abs{\bu_n[i]}^2 }\,.
\end{align}
Using the coordinate-wise bounded variance assumption on the stochastic gradient oracle, 
\begin{align}
\E \abs{\sum_{t=1}^n \beta^{n-t}   (\partial_i F(\bx_t
) - \bg_t[i])}^2 = \E \sum_{t=1}^n \beta^{2(n-t)}   \abs{\partial_i F(\bx_t
) - \bg_t[i]}^2  \leq \frac{\sigma_i^2}{1-\beta^2}\,.
\end{align}
Therefore, summing over $n=1,\dots, T$, and using the fact that $\frac{1}{1-\beta^2}\leq \frac{1}{1-\beta}$, we get the following bound on the second term:
\begin{align}
\E\sum_{n=1}^T\sum_{t=1}^n \beta^{n-t} (1-\beta)  \inp{\nabla F(\bx_t
) - \bg_t}{-\bu_n} &\leq \sum_{n=1}^T (1-\beta) \cdot \frac{ D\sum_{i=1}^d \sigma_i}{\sqrt{1-\beta^2}} \\
&\leq D T \left(\sum_{i=1}^d\sigma_i\right)\sqrt{1-\beta}\,. 
\end{align}
\item[\circled{3}:] We use the same manipulation as before:
\begin{align}
\E\sum_{n=1}^T\sum_{t=1}^n \beta^{n-t} (1-\beta) \inp{\bg_t}{\bz_t- \bu_n}  &= (1-\beta) \E \sum_{t=1}^T \dregret{t}(\bu_t) \,.
\end{align}
\end{itemize}
Combining the above, we get the desired result in \autoref{lem:o2nc_coordinate}.

\section{Proof of main theorems}

\subsection{Proof of \autoref{thm:global}} 
\label{pf:thm:global}
By \autoref{def:reg_station}, since $\E[\by_t] = \barx_t$, it holds that
\begin{align}
\E_{t\sim [T]}\regnorm{\lambda}{\nabla F(\barx_t) } \leq    \E_{t\sim [T]}\left[\norm{\E_{\by_t}\nabla F(\by_t)} + \lambda  \E_{\by_t} \norm{\by_t - \barx_t}^2 \right]\,. 
\end{align}

We begin with the second term (the variance term). By \autoref{lem:variance}, we have
\begin{align}
\lambda \E_{t\sim [T]}\E_{\by_t} \norm{\by_t - \barx_t}^2 &\leq   12\frac{\lambda D^2}{(1-\beta)^2}\,. 
\end{align}
Hence, by choosing $D= \frac{(1-\beta)\eps^{1/2}}{4\lambda^{1/2}}$, it follows that $\lambda \cdot \E_{t\sim [T]}\E_{\by_t} \norm{\by_t - \barx_t}^2 \leq \eps$.

Next consider the first term (the norm of the averaged gradients). 
Plugging the regret bound \eqref{exp:regret_bound} into \autoref{lem:o2nc}, we get
\begin{align}
\E_{t\sim [T]}\norm{\E_{\by_t}\nabla F(\by_t)}&\leq  \frac{\Delta}{DT} + \frac{2G+\sigma}{(1-\beta) T} +\sigma \sqrt{1-\beta}    +     \frac{4(G+\sigma)}{T\sqrt{1-\beta}}  +   4(G+\sigma) \sqrt{1-\beta}  \\
&\leq  \frac{4\Delta \lambda^{1/2}}{ (1-\beta)\eps^{1/2}T} + \frac{6G+5\sigma}{(1-\beta) T} +(4G+5\sigma) \sqrt{1-\beta}    \,,
\end{align}
where the last line follows since $ \frac{1}{\sqrt{1-\beta}} \leq \frac{1}{1-\beta}$ and $D= \frac{(1-\beta)\eps^{1/2}}{4\lambda^{1/2}}$.
Choosing $\beta = 1-(\frac{\eps}{10C})^2$, the last term is bounded by $\frac{G+\sigma}{2C} \eps$. Moreover, choosing $T= (1-\beta)^{-1}\cdot \max\left\{4\Delta \lambda^{1/2}\eps^{-3/2} , ~12C\eps^{-1}  \right\}$, the first and second terms are bounded by $\eps$ and $\frac{G+\sigma}{2C}\eps$, respectively. This concludes the proof.  

\subsection{Proof of \autoref{thm:coordinate}}
\label{pf:thm:coordinate}

By \autoref{def:reg_station_coordinate}, since $\E[\by_t] = \barx_t$, it holds that
\begin{align}
\E_{t\sim [T]}\regnorm{\lambda}{\nabla F(\barx_t) }_1 \leq    \E_{t\sim [T]}\left[\norm{\E_{\by_t}\nabla F(\by_t)}_1 + \lambda  \E_{\by_t} \norm{\by_t - \barx_t}_2^2 \right]\,. 
\end{align}

We begin with the second term (the variance term). This time, given that now each coordinate of update $\bz_t$ is bounded by $D$, \emph{i.e.}, $\abs{\bz_t[i]}\leq D$, applying the variance bound due to \autoref{lem:variance} coordinate-wise implies:
\begin{align}  
\lambda \E_{t\sim [T]}\E_{\by_t} \norm{\by_t - \barx_t}_2^2 \leq 12\frac{\lambda d D^2}{(1-\beta)^2} \,. 
\end{align} 

Hence, by choosing $D= \frac{(1-\beta)\eps^{1/2}}{4d^{1/2}\lambda^{1/2}}$, it follows that $\lambda \E_{t\sim [T]}\E_{\by_t} \norm{\by_t - \barx_t}^2 \leq \eps$.

Next consider the first term (the $L_1$-norm of the averaged gradients). 
Plugging the regret bound \eqref{exp:regret_bound_coordinate} into \autoref{lem:o2nc_coordinate}, and doing similar manipulations as the proof of \autoref{thm:global}, we get
\begin{align}
\E_{t\sim [T]}\norm{\E_{\by_t}\nabla F(\by_t)}_1 &\leq \frac{\Delta}{DT} + \frac{\norm{6\bG+5\bsigma}_1}{(1-\beta) T} +\norm{4\bG+5\bsigma}_1 \sqrt{1-\beta}    \\
&= \frac{4\Delta d^{1/2}\lambda^{1/2} }{  (1-\beta)\eps^{1/2}T} + \frac{\norm{6\bG+5\bsigma}_1}{(1-\beta) T} +\norm{4\bG+5\bsigma}_1 \sqrt{1-\beta}  \,,
\end{align} 
where the last line follows since $D=\frac{(1-\beta)\eps^{1/2}}{4d^{1/2}\lambda^{1/2}}$.
Choosing $\beta = 1-(\frac{\eps}{10C})^2$, the last term is bounded by $\frac{\norm{\bG+\bsigma}_1}{2C}\cdot \eps$. Moreover, choosing $T= (1-\beta)^{-1}\cdot \max\left\{4\Delta d^{1/2} \lambda^{1/2}\eps^{-3/2}, ~12C\eps^{-1} \right\}$, the first and second terms are bounded by $\eps$ and $\frac{\norm{\bG+\bsigma}_1}{2C}\cdot \eps$, respectively. This concludes the proof.  
 
\newpage
\section*{NeurIPS Paper Checklist}

\begin{enumerate}

\item {\bf Claims}
\item[] Question: Do the main claims made in the abstract and introduction accurately reflect the paper's contributions and scope?
\item[] Answer: \answerYes{} 
\item[] Justification: The abstract and introduction accurately reflect the paper's contributions regarding the analysis of Adam with EMA for nonconvex and nonsmooth optimization. 
\item[] Guidelines:
\begin{itemize}
\item The answer NA means that the abstract and introduction do not include the claims made in the paper.
\item The abstract and/or introduction should clearly state the claims made, including the contributions made in the paper and important assumptions and limitations. A No or NA answer to this question will not be perceived well by the reviewers. 
\item The claims made should match theoretical and experimental results, and reflect how much the results can be expected to generalize to other settings. 
\item It is fine to include aspirational goals as motivation as long as it is clear that these goals are not attained by the paper. 
\end{itemize}

\item {\bf Limitations}
\item[] Question: Does the paper discuss the limitations of the work performed by the authors?
\item[] Answer: \answerYes{} 
\item[] Justification:  The paper discuss the limitations of the work in \autoref{sec:discussion}.
\item[] Guidelines:
\begin{itemize}
\item The answer NA means that the paper has no limitation while the answer No means that the paper has limitations, but those are not discussed in the paper. 
\item The authors are encouraged to create a separate "Limitations" section in their paper.
\item The paper should point out any strong assumptions and how robust the results are to violations of these assumptions (e.g., independence assumptions, noiseless settings, model well-specification, asymptotic approximations only holding locally). The authors should reflect on how these assumptions might be violated in practice and what the implications would be.
\item The authors should reflect on the scope of the claims made, e.g., if the approach was only tested on a few datasets or with a few runs. In general, empirical results often depend on implicit assumptions, which should be articulated.
\item The authors should reflect on the factors that influence the performance of the approach. For example, a facial recognition algorithm may perform poorly when image resolution is low or images are taken in low lighting. Or a speech-to-text system might not be used reliably to provide closed captions for online lectures because it fails to handle technical jargon.
\item The authors should discuss the computational efficiency of the proposed algorithms and how they scale with dataset size.
\item If applicable, the authors should discuss possible limitations of their approach to address problems of privacy and fairness.
\item While the authors might fear that complete honesty about limitations might be used by reviewers as grounds for rejection, a worse outcome might be that reviewers discover limitations that aren't acknowledged in the paper. The authors should use their best judgment and recognize that individual actions in favor of transparency play an important role in developing norms that preserve the integrity of the community. Reviewers will be specifically instructed to not penalize honesty concerning limitations.
\end{itemize}

\item {\bf Theory Assumptions and Proofs}
\item[] Question: For each theoretical result, does the paper provide the full set of assumptions and a complete (and correct) proof?
\item[] Answer: \answerYes{} 
\item[] Justification:  The assumptions and the proofs are provided both in the main text and the appendix.
\item[] Guidelines:
\begin{itemize}
\item The answer NA means that the paper does not include theoretical results. 
\item All the theorems, formulas, and proofs in the paper should be numbered and cross-referenced.
\item All assumptions should be clearly stated or referenced in the statement of any theorems.
\item The proofs can either appear in the main paper or the supplemental material, but if they appear in the supplemental material, the authors are encouraged to provide a short proof sketch to provide intuition. 
\item Inversely, any informal proof provided in the core of the paper should be complemented by formal proofs provided in appendix or supplemental material.
\item Theorems and Lemmas that the proof relies upon should be properly referenced. 
\end{itemize}

\item {\bf Experimental Result Reproducibility}
\item[] Question: Does the paper fully disclose all the information needed to reproduce the main experimental results of the paper to the extent that it affects the main claims and/or conclusions of the paper (regardless of whether the code and data are provided or not)?
\item[] Answer: \answerNA{} 
\item[] Justification:  This paper is a theory paper and does not have any experiments.
\item[] Guidelines:
\begin{itemize}
\item The answer NA means that the paper does not include experiments.
\item If the paper includes experiments, a No answer to this question will not be perceived well by the reviewers: Making the paper reproducible is important, regardless of whether the code and data are provided or not.
\item If the contribution is a dataset and/or model, the authors should describe the steps taken to make their results reproducible or verifiable. 
\item Depending on the contribution, reproducibility can be accomplished in various ways. For example, if the contribution is a novel architecture, describing the architecture fully might suffice, or if the contribution is a specific model and empirical evaluation, it may be necessary to either make it possible for others to replicate the model with the same dataset, or provide access to the model. In general. releasing code and data is often one good way to accomplish this, but reproducibility can also be provided via detailed instructions for how to replicate the results, access to a hosted model (e.g., in the case of a large language model), releasing of a model checkpoint, or other means that are appropriate to the research performed.
\item While NeurIPS does not require releasing code, the conference does require all submissions to provide some reasonable avenue for reproducibility, which may depend on the nature of the contribution. For example
\begin{enumerate}
\item If the contribution is primarily a new algorithm, the paper should make it clear how to reproduce that algorithm.
\item If the contribution is primarily a new model architecture, the paper should describe the architecture clearly and fully.
\item If the contribution is a new model (e.g., a large language model), then there should either be a way to access this model for reproducing the results or a way to reproduce the model (e.g., with an open-source dataset or instructions for how to construct the dataset).
\item We recognize that reproducibility may be tricky in some cases, in which case authors are welcome to describe the particular way they provide for reproducibility. In the case of closed-source models, it may be that access to the model is limited in some way (e.g., to registered users), but it should be possible for other researchers to have some path to reproducing or verifying the results.
\end{enumerate}
\end{itemize}

\item {\bf Open access to data and code}
\item[] Question: Does the paper provide open access to the data and code, with sufficient instructions to faithfully reproduce the main experimental results, as described in supplemental material?
\item[] Answer: \answerNA{} 
\item[] Justification: This paper is a theory paper and does not have any experiments.
\item[] Guidelines:
\begin{itemize}
\item The answer NA means that paper does not include experiments requiring code.
\item Please see the NeurIPS code and data submission guidelines (\url{https://nips.cc/public/guides/CodeSubmissionPolicy}) for more details.
\item While we encourage the release of code and data, we understand that this might not be possible, so “No” is an acceptable answer. Papers cannot be rejected simply for not including code, unless this is central to the contribution (e.g., for a new open-source benchmark).
\item The instructions should contain the exact command and environment needed to run to reproduce the results. See the NeurIPS code and data submission guidelines (\url{https://nips.cc/public/guides/CodeSubmissionPolicy}) for more details.
\item The authors should provide instructions on data access and preparation, including how to access the raw data, preprocessed data, intermediate data, and generated data, etc.
\item The authors should provide scripts to reproduce all experimental results for the new proposed method and baselines. If only a subset of experiments are reproducible, they should state which ones are omitted from the script and why.
\item At submission time, to preserve anonymity, the authors should release anonymized versions (if applicable).
\item Providing as much information as possible in supplemental material (appended to the paper) is recommended, but including URLs to data and code is permitted.
\end{itemize}

\item {\bf Experimental Setting/Details}
\item[] Question: Does the paper specify all the training and test details (e.g., data splits, hyperparameters, how they were chosen, type of optimizer, etc.) necessary to understand the results?
\item[] Answer: \answerNA{} 
\item[] Justification: This paper is a theory paper and does not have any experiments.
\item[] Guidelines:
\begin{itemize}
\item The answer NA means that the paper does not include experiments.
\item The experimental setting should be presented in the core of the paper to a level of detail that is necessary to appreciate the results and make sense of them.
\item The full details can be provided either with the code, in appendix, or as supplemental material.
\end{itemize}

\item {\bf Experiment Statistical Significance}
\item[] Question: Does the paper report error bars suitably and correctly defined or other appropriate information about the statistical significance of the experiments?
\item[] Answer: \answerNA{} 
\item[] Justification: This paper is a theory paper and does not have any experiments.
\item[] Guidelines:
\begin{itemize}
\item The answer NA means that the paper does not include experiments.
\item The authors should answer "Yes" if the results are accompanied by error bars, confidence intervals, or statistical significance tests, at least for the experiments that support the main claims of the paper.
\item The factors of variability that the error bars are capturing should be clearly stated (for example, train/test split, initialization, random drawing of some parameter, or overall run with given experimental conditions).
\item The method for calculating the error bars should be explained (closed form formula, call to a library function, bootstrap, etc.)
\item The assumptions made should be given (e.g., Normally distributed errors).
\item It should be clear whether the error bar is the standard deviation or the standard error of the mean.
\item It is OK to report 1-sigma error bars, but one should state it. The authors should preferably report a 2-sigma error bar than state that they have a 96\% CI, if the hypothesis of Normality of errors is not verified.
\item For asymmetric distributions, the authors should be careful not to show in tables or figures symmetric error bars that would yield results that are out of range (e.g. negative error rates).
\item If error bars are reported in tables or plots, The authors should explain in the text how they were calculated and reference the corresponding figures or tables in the text.
\end{itemize}

\item {\bf Experiments Compute Resources}
\item[] Question: For each experiment, does the paper provide sufficient information on the computer resources (type of compute workers, memory, time of execution) needed to reproduce the experiments?
\item[] Answer: \answerNA{} 
\item[] Justification: This paper is a theory paper and does not have any experiments.
\item[] Guidelines:
\begin{itemize}
\item The answer NA means that the paper does not include experiments.
\item The paper should indicate the type of compute workers CPU or GPU, internal cluster, or cloud provider, including relevant memory and storage.
\item The paper should provide the amount of compute required for each of the individual experimental runs as well as estimate the total compute. 
\item The paper should disclose whether the full research project required more compute than the experiments reported in the paper (e.g., preliminary or failed experiments that didn't make it into the paper). 
\end{itemize}

\item {\bf Code Of Ethics}
\item[] Question: Does the research conducted in the paper conform, in every respect, with the NeurIPS Code of Ethics \url{https://neurips.cc/public/EthicsGuidelines}?
\item[] Answer: \answerYes{} 
\item[] Justification: The research adheres to the NeurIPS Code of Ethics.

\item[] Guidelines:
\begin{itemize}
\item The answer NA means that the authors have not reviewed the NeurIPS Code of Ethics.
\item If the authors answer No, they should explain the special circumstances that require a deviation from the Code of Ethics.
\item The authors should make sure to preserve anonymity (e.g., if there is a special consideration due to laws or regulations in their jurisdiction).
\end{itemize}

\item {\bf Broader Impacts}
\item[] Question: Does the paper discuss both potential positive societal impacts and negative societal impacts of the work performed?
\item[] Answer: \answerNA{} 
\item[] Justification: This paper is a theory paper, and we do not forsee any direct societal implications arising from the research.
\item[] Guidelines: 
\begin{itemize}
\item The answer NA means that there is no societal impact of the work performed.
\item If the authors answer NA or No, they should explain why their work has no societal impact or why the paper does not address societal impact.
\item Examples of negative societal impacts include potential malicious or unintended uses (e.g., disinformation, generating fake profiles, surveillance), fairness considerations (e.g., deployment of technologies that could make decisions that unfairly impact specific groups), privacy considerations, and security considerations.
\item The conference expects that many papers will be foundational research and not tied to particular applications, let alone deployments. However, if there is a direct path to any negative applications, the authors should point it out. For example, it is legitimate to point out that an improvement in the quality of generative models could be used to generate deepfakes for disinformation. On the other hand, it is not needed to point out that a generic algorithm for optimizing neural networks could enable people to train models that generate Deepfakes faster.
\item The authors should consider possible harms that could arise when the technology is being used as intended and functioning correctly, harms that could arise when the technology is being used as intended but gives incorrect results, and harms following from (intentional or unintentional) misuse of the technology.
\item If there are negative societal impacts, the authors could also discuss possible mitigation strategies (e.g., gated release of models, providing defenses in addition to attacks, mechanisms for monitoring misuse, mechanisms to monitor how a system learns from feedback over time, improving the efficiency and accessibility of ML).
\end{itemize}

\item {\bf Safeguards}
\item[] Question: Does the paper describe safeguards that have been put in place for responsible release of data or models that have a high risk for misuse (e.g., pretrained language models, image generators, or scraped datasets)?
\item[] Answer: \answerNA{} 
\item[] Justification: This paper is a theory paper and does not have any experiments.
\item[] Guidelines:
\begin{itemize}
\item The answer NA means that the paper poses no such risks.
\item Released models that have a high risk for misuse or dual-use should be released with necessary safeguards to allow for controlled use of the model, for example by requiring that users adhere to usage guidelines or restrictions to access the model or implementing safety filters. 
\item Datasets that have been scraped from the Internet could pose safety risks. The authors should describe how they avoided releasing unsafe images.
\item We recognize that providing effective safeguards is challenging, and many papers do not require this, but we encourage authors to take this into account and make a best faith effort.
\end{itemize}

\item {\bf Licenses for existing assets}
\item[] Question: Are the creators or original owners of assets (e.g., code, data, models), used in the paper, properly credited and are the license and terms of use explicitly mentioned and properly respected?
\item[] Answer: \answerYes{} 
\item[] Justification: The paper accurately credits the existing results by citing the papers that inspired the techniques used in this work. 
\item[] Guidelines:
\begin{itemize}
\item The answer NA means that the paper does not use existing assets.
\item The authors should cite the original paper that produced the code package or dataset.
\item The authors should state which version of the asset is used and, if possible, include a URL.
\item The name of the license (e.g., CC-BY 4.0) should be included for each asset.
\item For scraped data from a particular source (e.g., website), the copyright and terms of service of that source should be provided.
\item If assets are released, the license, copyright information, and terms of use in the package should be provided. For popular datasets, \url{paperswithcode.com/datasets} has curated licenses for some datasets. Their licensing guide can help determine the license of a dataset.
\item For existing datasets that are re-packaged, both the original license and the license of the derived asset (if it has changed) should be provided.
\item If this information is not available online, the authors are encouraged to reach out to the asset's creators.
\end{itemize}

\item {\bf New Assets}
\item[] Question: Are new assets introduced in the paper well documented and is the documentation provided alongside the assets?
\item[] Answer: \answerNA{} 
\item[] Justification: The paper does not introduce any new assets.
\item[] Guidelines:
\begin{itemize}
\item The answer NA means that the paper does not release new assets.
\item Researchers should communicate the details of the dataset/code/model as part of their submissions via structured templates. This includes details about training, license, limitations, etc. 
\item The paper should discuss whether and how consent was obtained from people whose asset is used.
\item At submission time, remember to anonymize your assets (if applicable). You can either create an anonymized URL or include an anonymized zip file.
\end{itemize}

\item {\bf Crowdsourcing and Research with Human Subjects}
\item[] Question: For crowdsourcing experiments and research with human subjects, does the paper include the full text of instructions given to participants and screenshots, if applicable, as well as details about compensation (if any)? 
\item[] Answer: \answerNA{} 
\item[] Justification: The paper does not involve crowdsourcing nor research with human subjects.
\item[] Guidelines:
\begin{itemize}
\item The answer NA means that the paper does not involve crowdsourcing nor research with human subjects.
\item Including this information in the supplemental material is fine, but if the main contribution of the paper involves human subjects, then as much detail as possible should be included in the main paper. 
\item According to the NeurIPS Code of Ethics, workers involved in data collection, curation, or other labor should be paid at least the minimum wage in the country of the data collector. 
\end{itemize}

\item {\bf Institutional Review Board (IRB) Approvals or Equivalent for Research with Human Subjects}
\item[] Question: Does the paper describe potential risks incurred by study participants, whether such risks were disclosed to the subjects, and whether Institutional Review Board (IRB) approvals (or an equivalent approval/review based on the requirements of your country or institution) were obtained?
\item[] Answer: \answerNA{} 
\item[] Justification: The paper does not involve crowdsourcing nor research with human subjects.
\item[] Guidelines:
\begin{itemize}
\item The answer NA means that the paper does not involve crowdsourcing nor research with human subjects.
\item Depending on the country in which research is conducted, IRB approval (or equivalent) may be required for any human subjects research. If you obtained IRB approval, you should clearly state this in the paper. 
\item We recognize that the procedures for this may vary significantly between institutions and locations, and we expect authors to adhere to the NeurIPS Code of Ethics and the guidelines for their institution. 
\item For initial submissions, do not include any information that would break anonymity (if applicable), such as the institution conducting the review.
\end{itemize}

\end{enumerate}

\end{document}